\title{Safely Exploring Novel Actions in Recommender Systems \\via Deployment-Efficient Policy Learning}
\author{
    Haruka Kiyohara\textsuperscript{\rm 1} % \thanks{This work was done at Hanjuku-Kaso Co., Ltd.},
    Yusuke Narita\textsuperscript{\rm 2},
    Yuta Saito\textsuperscript{\rm 3},
    Kei Tateno\textsuperscript{\rm 4},
    Takuma Udagawa\textsuperscript{\rm 4}
}
\definecolor{dkred}{rgb}{0.8,0,0}
\newtheorem{theorem}{Theorem}
\newcommand{\textbfit}[1]{\textbf{\textit{#1}}}
\newtheorem{assumption}{Assumption}[section]
\DeclareMathOperator*{\argmax}{arg\,max}
\newcommand{\mE}{\mathbb{E}}
\newcommand{\calD}{\mathcal{D}}
\newcommand{\calX}{\mathcal{X}}
\newcommand{\calA}{\mathcal{A}}
\begin{document}

\maketitle

\begin{abstract}
In many real recommender systems, novel items are added frequently over time. The importance of sufficiently presenting novel actions has widely been acknowledged for improving long-term user engagement. A recent work builds on Off-Policy Learning (OPL), which trains a policy from only logged data, however, the existing methods can be unsafe in the presence of novel actions. \textbfit{Our goal is to develop a framework to enforce exploration of novel actions with a guarantee for safety.}
To this end, we first develop \textit{Safe Off-Policy Policy Gradient} (Safe OPG), which is a model-free safe OPL method based on a high confidence off-policy evaluation. In our first experiment, we observe that Safe OPG almost always satisfies a safety requirement, even when existing methods violate it greatly.
However, the result also reveals that Safe OPG tends to be too conservative, suggesting a difficult tradeoff between guaranteeing safety and exploring novel actions.
To overcome this tradeoff, we also propose a novel framework called \textit{Deployment-Efficient Policy Learning for Safe User Exploration}, which leverages \textit{safety margin} and gradually relaxes safety regularization during multiple (not many) deployments. Our framework thus enables exploration of novel actions while guaranteeing safe implementation of recommender systems. 
\end{abstract}

\section{Introduction}
Many real-world interactive systems such as recommender and search evolve over time, with a growing number of feasible actions~\citep{chandak2020lifelong}.
For example, music streaming systems regularly refurbish their recommendation algorithms in response to the release of new songs.
Exploring such novel actions has widely been considered crucial, because they enhance diversity and novelty of recommendations, retain users' interests, reduce opportunity loss, and promote long-term engagement~\citep{chen2021values,kaminskas2016diversity}.
It is also valuable to provide a certain amount of exposure to novel actions from the fairness perspective~\citep{mehrotra2018towards,singh2018fairness}.
Indeed, enforcing diversity and novelty in recommender and search systems via \textit{user exploration}~\citep{chen2021values} have widely been recognized as crucial factors in Airbnb~\citep{abdool2020managing}, Youtube~\citep{chen2019top,chen2021values}, and Spotify~\citep{anderson2020algorithmic,mehrotra2021algorithmic}.

To discover new user interests, especially in the presence of novel actions, online bandits and reinforcement learning (RL) are widely used~\citep{chandak2020lifelong}. Based on the idea of \textit{optimism in the face of uncertainty}, online learning explores novel actions and learns their values through many online interactions~\citep{jaksch2010near}. For example, \citet{stamenkovic2022choosing} rely on a multi-objective online RL to achieve a balance between relevance and novelty of recommendations. While such online learning is effective in exploring novel actions, online exploration is often unsafe in practice because an online policy may choose low quality actions frequently in its initial learning phase. Moreover, frequent policy update is often infeasible due to its large implementation cost~\citep{matsushima2021deployment}.

Thus, a seminal work by~\citet{chen2021values} propose techniques for user exploration such as entropy regularization building on recent advances in Off-Policy Learning (OPL)~\citep{saito2021counterfactual,swaminathan2015batch}. OPL reduces the risk and cost of online learning by leveraging offline logged data collected by a logging policy and has successfully been applied to a static setup with a fixed set of actions~\citep{chen2019top,joachims2018deep,le2019batch,sachdeva2020off}. However, myopically applying OPL to the case with novel actions encounters critical safety issues. In particular, we empirically show that user exploration based on OPL~\citep{chen2021values} can become extremely unsafe in the sense that it significantly underperforms the logging policy, whereas the vital role of OPL is to improve and outperform the logging policy. Deploying such an unsafe policy should be avoided in practice, as a large performance drop may damage some important business metrics~\citep{gruson2019offline}.

Motivated by the aforementioned issues of online and offline learning regarding safety and user exploration, \textbfit{we aim at developing an easily implementable method that can explore novel actions, while guaranteeing safety at the same time}.
To this end, we first propose a modular safe OPL framework called \textit{Safe Off-Policy Policy Gradient} (Safe OPG), which solves a constrained optimization problem that requires a policy to exceed a predefined safety threshold (e.g., performance of the logging policy) with high probability. 
A fascinating feature of our Safe OPG over other safe off-policy learning methods~\citep{le2019batch,xu2021constraints} is that ours is totally model-free and does not need a model-based offline estimation of the safety constraint, which is likely to be unreliable in the presence of novel actions. Specifically, during its policy training phase, Safe OPG validates whether the policy meets the safety constraint based on High Confidence Off-Policy Evaluation (HCOPE)~\citep{thomas2015high}.
It then maximizes the policy value with a regularization for safety so that we can ensure that the performance is above a given threshold.
In the first semi-synthetic experiment, we show that Safe OPG mostly satisfies the safety constraint even in the existence of some novel actions, while the conventional OPL methods (combined with the user exploration technique of~\citet{chen2021values}) violate the safety requirement considerably. What is particularly problematic about the existing framework is that it sometimes substantially underperforms the logging policy, which is often the baseline policy we should improve upon. Our Safe OPG can automatically avoid this unsafe behavior of the existing framework for a range of logging policies. However, we also observe that Safe OPG barely chooses novel actions, because it becomes too conservative about those actions to \textit{guarantee} safety. These empirical observations imply a serious and inevitable tradeoff between guaranteeing safety and exploring novel actions.

To overcome this critical tradeoff between safety and novelty, we propose a new policy learning framework for safe exploration of novel actions, which we call \textit{\textbf{D}eployment \textbf{E}fficient \textbf{P}olicy Learning for \textbf{S}afe \textbf{U}ser \textbf{E}xploration (DEPSUE)} inspired by a framework of \textit{deployment-efficient policy learning} in offline RL~\citep{matsushima2021deployment}. Compared to the existing methods on top of conventional OPL, our DEPSUE is able to leverage \textit{safety margin} accumulated during previous deployments and relax safety regularization gradually to pursue novelty. As a result, DEPSUE avoids being overly conservative and enables \textit{safe exploration of novel actions} with deployment costs much lower than those of online learning. Additional experiments on semi-synthetic and real-world data demonstrate that DEPSUE can successfully explore novel actions without violating safety constraints and having costly policy deployments.

Our contributions are summarized as follows:
\begin{itemize}
    \item We formulate a new policy learning problem, which aims at exploring novel actions while guaranteeing safety.
    \item We develop Safe OPG, a model-free OPL method that guarantees safety even in the presence of novel actions.
    \item We empirically illustrate a tradeoff between guaranteeing safety and exploring novel actions for the first time.
    \item We propose a policy learning framework called DEPSUE, which is able to overcome the aforementioned tradeoff by leveraging past deployment results to gradually relax the safety regularization and to pursue novelty.
\end{itemize}

\section{OPL for User Exploration}
This section describes our policy learning problem in the contextual bandit setup and the objectives regarding novelty.

\subsection{Formulation and Objectives}
Let $x \in \calX \subseteq \mathbb{R}^{d_x}$ be a context vector (e.g., user demographics) and $r \in [0, r_{max}]$ be a reward (e.g., whether a user listens to a recommended song). 
Contexts and rewards are sampled from unknown probability distributions, $p(x)$ and $p(r|x,a)$, where $a \in \calA$ is a discrete action (e.g., a recommended song). 
We also observe action feature $e_a \in \mathcal{E} \subseteq \mathbb{R}^{d_a}$ (e.g., title, genre, and singer) for each action $a$. 
We use $q(x, e_a) := \mathbb{E}[r|x,e_a]$ to denote the expected reward function based on an assumption that the action feature retains information useful for inferring the reward function~\citep{saito2022off}. 
We call a function $\pi: \calX \rightarrow \Delta(\calA)$ a policy, where $\pi(a|x)$ is the probability of choosing action $a$ given context $x$.
We assume that a static logging policy $\pi_0: \calX \rightarrow \Delta(\calA_0)$ has already been deployed in the system. 
The logging policy can either be deterministic or stochastic, and we consider a realistic situation where $\pi_0$ can choose actions from only support action set $\calA_0 \subseteq \calA$. 
That is, there exist some novel actions $a \in \calA \setminus \calA_0$ in the system, which are unavailable to $\pi_0$. Exploring such novel actions is particularly important in recommender system applications, because it is known to enhance long-term user experience~\citep{anderson2020algorithmic,chen2021values,stamenkovic2022choosing}.
Presenting novel actions is also beneficial because it provides fair opportunities to all available actions~\citep{mehrotra2018towards,singh2018fairness} and stores valuable logged data for future policy learning~\citep{sachdeva2020off}. To satisfy these intrinsic needs for long-term platform success and fairness requirements, we aim at balancing the following (probably competing) objectives.
\begin{align*}
    \textit{Policy Value:} \,\, & V(\pi) := \mE_{p(x)\pi(a|x)p(r|x,a)}[r], \\
    \textit{Novelty:} \,\, & N(\pi) := \mathbb{E}_{p(x) \pi(a|x)} \left[ \mathbb{I} \{ a \in \calA \setminus \calA_0 \} \right],
\end{align*}
where $\mathbb{I}\{\cdot\}$ is the indicator function. \textit{Policy Value} is simply the expected performance of policy $\pi$~\citep{saito2020open}.
\textit{Novelty} measures how frequently a policy chooses novel actions~\citep{baeza1999modern}. 
Note that recent works on user exploration~\citep{chen2021values,stamenkovic2022choosing} pursue similar objectives to ours, however, these studies are unaware of the potential safety issue of user exploration. 
As we will show in the following sections, it is challenging to pursue novelty, while guaranteeing a certain level of policy value (which we call \textit{safety}). The main motivation of our work is thus to sufficiently explore novel actions while guaranteeing safety with much more reasonable and controllable implementation costs compared to online learning (via leveraging offline logged data).

\subsection{Existing Approaches and Safety Issues} \label{sec:existing_approach}
A promising approach for exploring novel actions is to train a policy by interacting with the online environment, widely known as \textit{online learning} (e.g., $\epsilon$-greedy~\citep{sutton2018reinforcement}, Thompson sampling~\citep{agrawal2013thompson}, and on-policy policy gradient~\citep{williams1992simple}).
In online learning, active exploration plays an important role in identifying high-quality novel actions. However, actively choosing uncertain actions may in turn exacerbate the policy performance if those actions are detrimental~\citep{kiyohara2021accelerating}. 
Moreover, online learning often requires substantial implementation costs, as the system has to repeatedly update the policy during its deployment. 
This makes online learning often infeasible in many practical applications~\citep{matsushima2021deployment}.

In response, Off-Policy Learning (OPL) has emerged as an alternative framework. The conventional OPL  maximizes the policy value $V(\pi)$ using only logged bandit data $\calD_0 := \{(x_i,a_i,r_i)\}_{i=1}^{n}$ collected by logging policy $\pi_0$~\citep{swaminathan2015batch}.\footnote{$(x,a,r) \overset{\mathrm{iid}}{\sim} p(x) \pi_0 (a | x) p(r | x, a)$.}
For example, Off-Policy Policy Gradient (OPG)~\citep{joachims2018deep,swaminathan2015batch} optimizes the policy parameter $\psi$ via gradient ascent: $\psi \leftarrow \psi + \eta \nabla_{\psi} \hat{V}(\pi_{\psi}; \calD_0)$ where $\eta$ is a learning rate. Even with some novel actions, we can estimate the policy gradient by leveraging the action feature $e_a$ and reward model $\hat{q}$, in a manner similar to the actor-critic method~\citep{konda2000actor}, as follows.
\begin{align}
    \nabla_{\psi} \hat{V}(\pi_{\psi}; \calD_0) := \mE_{n} \left[ \mE_{\pi_{\psi}(a|x_i)} [ \hat{q}(x_i, e_a)  \nabla_{\psi} \log \pi_{\psi}(a | x_i) ] \right], \label{eq:gradient}
\end{align}
where $\mE_{n} [\cdot]$ is the empirical expectation over $n$ data in $\calD_0$.

\citet{chen2021values} propose several modifications to the conventional objective of OPL with the aim of pursing diversity and novelty through user exploration. Specifically, they empirically demonstrate the advantages of entropy regularization, intrinsic motivation, and reward shaping in terms of long-term user engagement through a large-scale online experiment. In this paper, we consider only entropy regularization as a representative method, as it is easily applicable to our contextual bandit setup. Entropy regularization simply introduces the entropy term to the objective as:
\begin{align}
    \max_\pi \; (1 - \alpha) \, V(\pi) - \alpha \, \mE_{p(x)\pi(a|x)} [\log \pi(a|x)], \label{eq:entropy}
\end{align}
where $\alpha$ is a tradeoff hyper-parameter between policy value and entropy. The entropy term pushes a policy to be closer to a uniform random policy, promoting user exploration. As we will show, it is challenging to pursue novelty, while guaranteeing a certain level of policy value (which we call \textit{safety}). The main motivation of our work is thus to sufficiently explore novel actions while guaranteeing safety.

\section{Safe Off-Policy Learning} \label{sec:safe_opl}
We now present our first proposal called \textit{Safe Off-Policy Policy Gradient} (Safe OPG), which is able to guarantee a safe policy learning even when there exist some novel actions in the system.
We first formulate the safe OPL problem as the following constrained policy optimization problem.
\begin{align}
    \begin{array}{cl}
        \max_{\pi \in \Pi} \hat{V}(\pi; \calD_0) \quad \mathrm{s.t.}\;  \mathrm{Pr}(V(\pi) > C) \geq 1 - \delta
    \end{array} \label{eq:constrained_obj}
\end{align}
where the constraint requires a policy to outperform a predefined safety threshold $C$ with probability at least $1 - \delta$. 
One reasonable choice of $C$ is the policy value of the logging policy ($V(\pi_0)$) to avoid impractical behaviors of OPL that underperform the existing policy.

\begin{algorithm}[tb]
\caption{Safe Off-Policy Policy Gradient (Safe OPG)} \label{algo:safe_opg} 
  \begin{algorithmic}[1]
    \REQUIRE two folds of logged data $\calD_0^{\mathrm{(S1)}}$ and $\calD_0^{\mathrm{(S2)}}$, safety threshold $C$, learning rate for policy training $\eta_{\psi}$, learning rate for regularization $\eta_{\lambda}$, number of gradient updates $S$, confidence $\delta$
    \ENSURE learned policy $\pi_{\psi}$
    \STATE Initialize policy network parameter $\psi$
    \STATE $\lambda \leftarrow 0$
    \FOR{$s = 1, \ldots, S$}
      \STATE Update the policy parameter $\psi$ by Eq.~\eqref{eq:update_policy}
      \STATE Update the regularization parameter $\lambda$ by Eq.~\eqref{eq:update_lambda}
    \ENDFOR
  \end{algorithmic}
\end{algorithm}

To guarantee safety as in Eq.~\eqref{eq:constrained_obj}, we estimate a high probability lower bound of $V(\pi)$ based on HCOPE~\citep{thomas2015high}:
\begin{align}
    & \hat{V}_{-}(\pi; \calD_0) \nonumber \\
    &:= \max_{\tau \in \mathbb{R}_{+}} 
    \left( \mE_n \left[ z_i \right]
    - \sqrt{\frac{2 \log (2 / \delta)\mathbb{V}_n(z_i)}{n-1}}
    - \frac{7 \, z_{max} \log (2 / \delta)}{3 \, (n - 1)} \right), \label{eq:lower_bound}
\end{align}
where $z_i := \min \{ \pi(a_i | x_i) / \pi_0(a_i | x_i), \tau \} \cdot r_i$ and $z_{max} := \tau \cdot r_{max}$. $\mathbb{V}_n(\cdot)$ denotes the sample variance. Algorithm 1 of \citep{thomas2015confidence} describes the detailed procedure of HCOPE, which optimizes $\tau$ using 1/20 of the whole data and then applies the tuned value of $\tau$ to construct a lower bound based on the rest of the data.\footnote{The first term of the RHS of Eq.~\eqref{eq:lower_bound} (i.e., $\mE_n[z_i]$) is equivalent to an OPE estimator called Clipped Inverse Propensity Scoring~\citep{strehl2010learning}.}

Given the lower bound estimate of the policy value in Eq.~\eqref{eq:lower_bound}, our Safe OPG solves the following constrained optimization problem to maximize the policy value while satisfying the safety constraint with high probability.
\begin{align}
    \begin{array}{cl}
        \max_{\pi \in \Pi} \hat{V}(\pi; \calD_0^{\mathrm{(S1)}}) \quad
        \mathrm{s.t.}\; \hat{V}_{-}(\pi; \calD_0^{\mathrm{(S2)}}) > C
    \end{array} \label{eq:constrained_lower_bound}
\end{align}
where we use two different folds of data, $\calD_0^{\mathrm{(S1)}}$ and $\calD_0^{\mathrm{(S2)}}$, because Eq.~\eqref{eq:lower_bound} is only applicable when $\pi$ and $\calD_0^{\mathrm{(S2)}}$ are independent. To solve Eq.~\eqref{eq:constrained_lower_bound}, we cast it into the following minmax optimization.
\begin{align}
    \min_{\lambda \in \mathbb{R}_{+}} \max_{\pi \in \Pi} \, \mathcal{L}(\pi; \lambda, \calD_0^{(S1)}) := \hat{V}(\pi; \calD_0^{\mathrm{(S1)}}) + \lambda \mathcal{R}(\pi; \calD_0^{\mathrm{(S1)}}) 
    \label{eq:minmax}
\end{align}
where we use $\mathcal{R}(\pi; \calD_0) := \mE_{n} [ r_i \log \pi(a_i | x_i) ]$ as a regularization function. $\mathcal{R}(\cdot)$ enforces $\pi$ to imitate $\pi_0$ only when the observed action looks promising so as to improve the lower bound of the policy value and to help satisfy the safety constraint~\citep{ma2019imitation}. 
$\lambda$ is a parameter to control the magnitude of regularization.
Finding an appropriate $\lambda$ is the key to achieving an effective and safe policy learning, because it determines how conservative the policy should be.
To adaptively tune this parameter, we optimize $\pi$ and $\lambda$ via an iterative procedure. Specifically, Safe OPG first updates the policy parameter $\psi$ while fixing $\lambda$.
\begin{align}
    \psi \leftarrow \psi + \eta_{\psi} \nabla_{\psi} \mathcal{L}(\pi_{\psi}; \lambda, \calD_0^{\mathrm{(S1)}}), \label{eq:update_policy}
\end{align}
where we use only $\calD_0^{\mathrm{(S1)}}$ to estimate the policy gradient.
We then externally validate whether the safety constraint is met using $\calD_0^{\mathrm{(S2)}}$. If the safety constraint is violated, we should increase the value of $\lambda$ to impose a stronger regularization. Otherwise, we can decrease its value to relax the regularization. We achieve this adaptive tuning of $\lambda$ as follows.
\begin{align}
    \lambda \leftarrow \max \left \{ \lambda - \eta_{\lambda} \left( \hat{V}_{-}(\pi_{\psi}; \calD_0^{\mathrm{(S2)}}) - C \right), 0 \right \}, \label{eq:update_lambda}
\end{align}
where $\eta_{\lambda}$ is a learning rate. Algorithm~\ref{algo:safe_opg} provides the whole procedure of our Safe OPG.\footnote{In Appendix~\ref{app:proof}, we show that the algorithm converges to a local saddle point $(\pi^{\star}, \lambda^{\star})$ that maximizes Eq.~\eqref{eq:minmax}, and that $\pi^{\star}$ satisfies the safety constraint under some reasonable assumptions.} Compared to existing safe policy learning methods~\citep{le2019batch,xu2021constraints}, our Safe OPG is highly flexible and is compatible with a range of user exploration objectives such as entropy regularization. Moreover, Safe OPG is well suited to the setting with novel actions, because it does not rely on an accurate model-based approximation of the safety constraint, which is extremely difficult with offline data and in the presence of novel actions.

\section{Inevitable Tradeoff between safety and user exploration} \label{sec:experiment1}
This section conducts an experiment to evaluate whether Safe OPG ensures a safe OPL even in the presence of novel actions.
We also evaluate whether it is possible to \textit{safely} explore novel actions by merely combining Safe OPG and entropy regularization~\citep{chen2021values}.

\begin{figure*}[htbp]
  \begin{minipage}[b]{1.00\linewidth}
  \centering
  \begin{minipage}[b]{0.65\linewidth}
  \centering
  \includegraphics[width=0.70\linewidth]{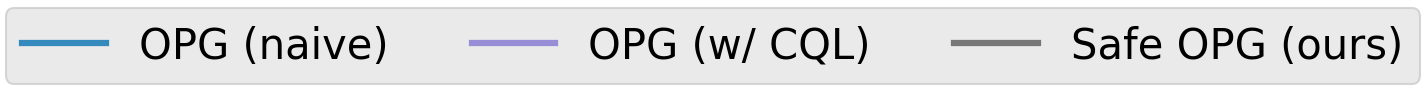}
  \end{minipage} \\
  \begin{minipage}[b]{0.98\linewidth}
  \centering
      \begin{minipage}[b]{0.33\linewidth}
        \centering
        \includegraphics[width=0.88\linewidth]{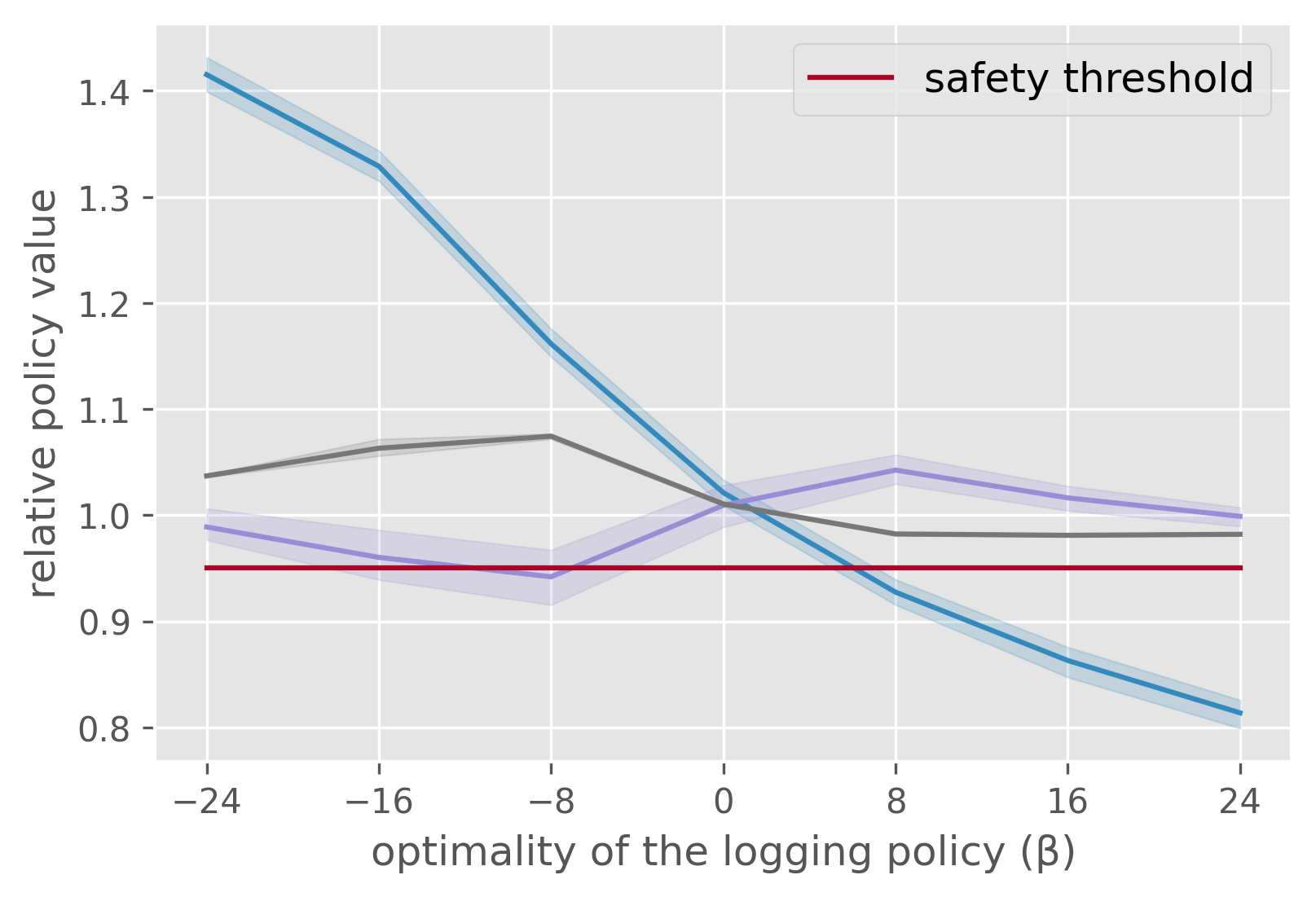}
        Relative Policy Value: $V(\pi) / V(\pi_0)$
      \end{minipage}
      \begin{minipage}[b]{0.33\linewidth}
        \centering
        \includegraphics[width=0.88\linewidth]{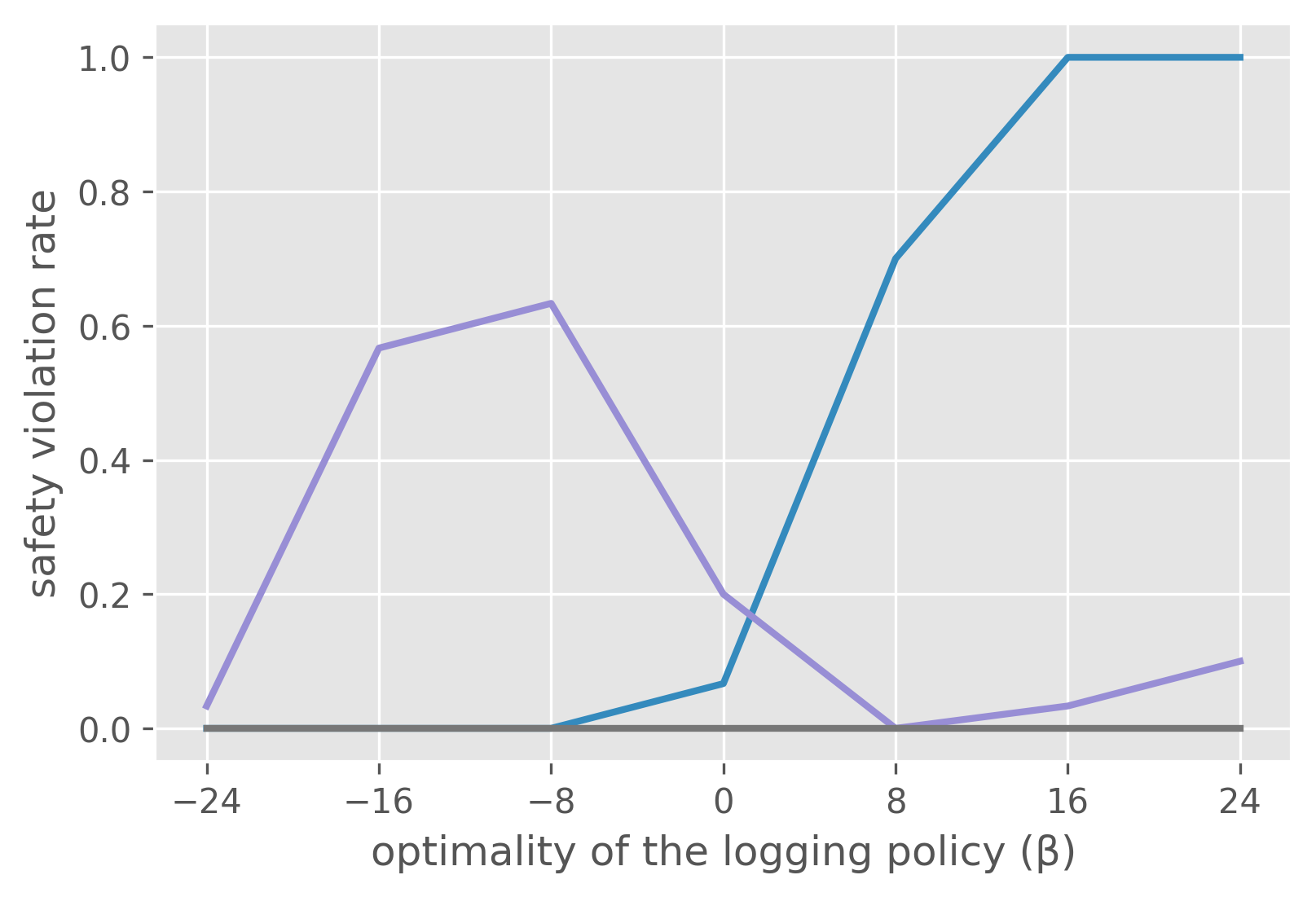}
        \\
        Safety Violation Rate: $P(V(\pi) < C)$
      \end{minipage}
      \begin{minipage}[b]{0.33\linewidth}
        \centering
        \includegraphics[width=0.88\linewidth]{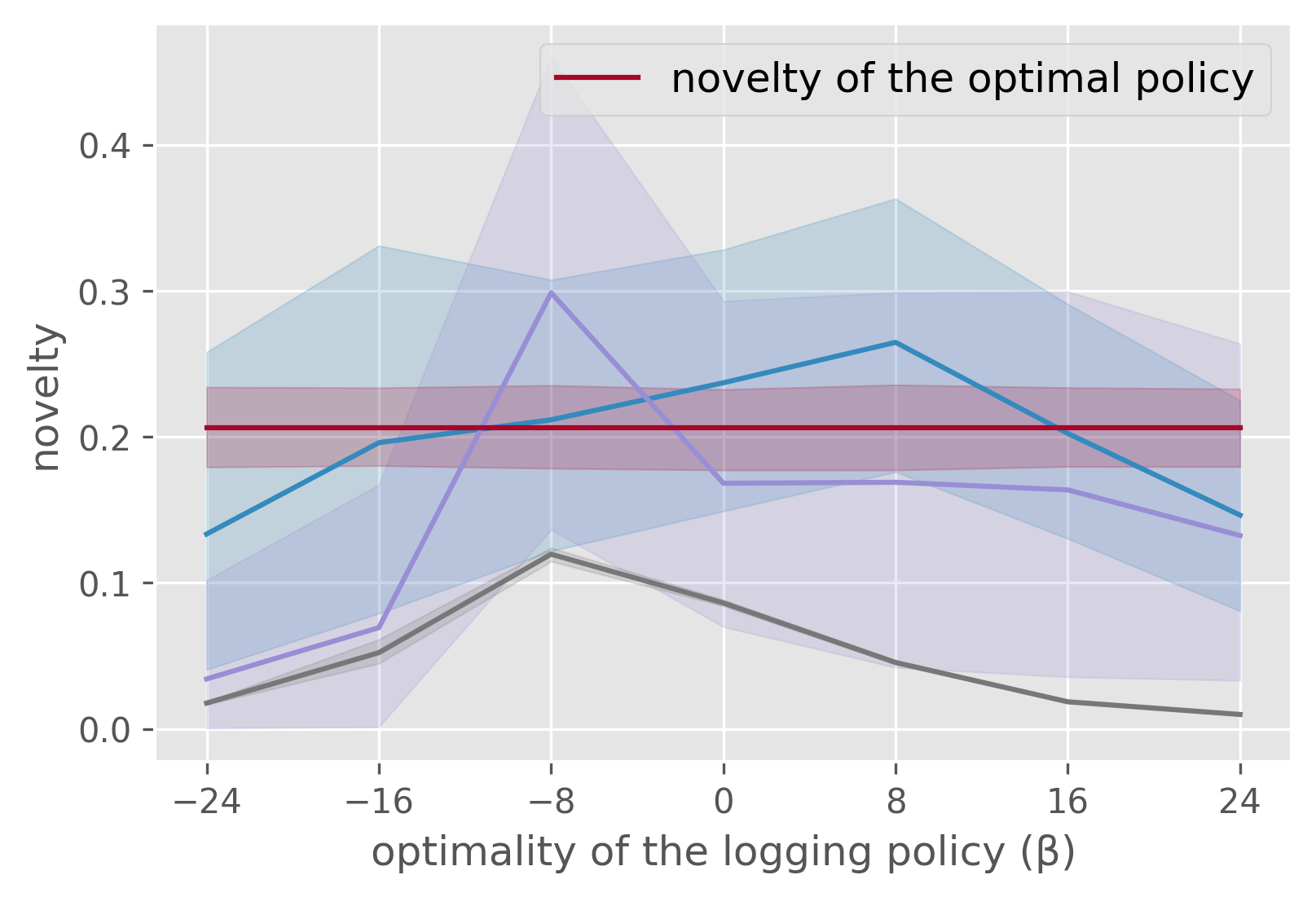}
        Novelty: $N(\pi)$
      \end{minipage}
      \vspace{-2mm}
      \caption{Comparing safety and novelty of Safe OPG, OPG (naive), and OPG (w/ CQL) under various logging policies; (a) relative policy value compared to that of the logging policy ($V(\pi) / V(\pi_0)$) with 95\% confidence interval, (b) probability of a policy violating the safety constraint ($P(V(\pi) < C)$) estimated with 30 simulation runs, and (c) novelty averaged over 30 simulation runs with 95\% confidence interval.}
      \label{fig:baseline_aggregate}
  \end{minipage}
  \end{minipage}
\end{figure*}

\subsection{Experiment Setup} \label{sec:setup1}

For this experiment, we build a semi-synthetic simulation environment using the \textit{MovieLens-1M} dataset~\citep{harper2015movielens}.\footnote{https://grouplens.org/datasets/movielens/1m/} 
The following specifies how we build the simulation environment. Further details (e.g., how we parameterize the synthetic reward function and policies) can be found in Appendix~\ref{app:experiment}.

\paragraph{Semi-Synthetic Data}
The MovieLens-1M dataset consists of approximately one million 5-scored movie ratings collected from 6,000 users and 4,000 movies. The dataset also provides some user demographic information (e.g., age and occupation) and categorical action features (e.g., title, year, and genre). Following~\citet{ma2020off}, we binarize the 5-scored rating data by regarding the highest score (5) as positive feedback and the other scores as negative feedback. 
We then synthesize the true expected reward function $q(x, e_a)$ by modeling the reward with a neural network using all available features in the original dataset.
Finally, we simulate a recommender environment by sampling a binary reward as $r \sim \operatorname{Bernoulli}(q(x, e_a))$ for each observed context $x$.
We then go on to generate the logged data $\calD_0$ based on the simulated reward function $q(x,a)$. We first randomly sample 800 movies as supported actions ($\calA_0$) and 200 movies as novel actions ($\calA \setminus \calA_0$).
Among the users who have rated at least one of the above actions, we randomly sample $80\%$ of users as training users and the others as testing users.
We also define a logging policy based on the simulated reward function $q(x,a)$ as:
\begin{align*}
    \pi_0(a | x) := \mathbb{I}\{a \in \calA_0\} \frac{\exp(\beta \cdot \mathrm{logit}(q(x, a)))}{\sum_{a' \in \calA_0}\exp(\beta \cdot \mathrm{logit}(q(x, a')))}
\end{align*}
where $\mathrm{logit}(z) := \log(z / (1 - z))$ and $\mathbb{I}\{\cdot\}$ is the indicator function. $\beta \in \{-24, -16, -8, 0, 8, 16, 24\}$ is the inverse temperature parameter of the softmax function, which determines the optimality and stochasticity of $\pi_0$. A large positive value of $\beta$ leads to a near-optimal and near-deterministic policy, while a negative value leads to a policy worse than uniform random.
Finally, we iterate the following process to generate $\calD_0$ consisting of 500,000 samples of observations.
\begin{enumerate}
    \item uniformly pick a context $x$ from the training users
    \item given $x$, sample an action based on $\pi_0(a | x)$
    \item given $x$ and $e_a$, sample a reward based on $q(x, e_a)$ where the action feature $e_a$ are recorded in the original dataset
\end{enumerate}

\paragraph{Compared Methods}
We compare our Safe OPG against OPG (naive) and OPG (w/ CQL). As we described in the previous section, OPG (naive) uses a direct estimate of $\hat{q}$ to estimate the policy gradient while OPG (w/ CQL) is based on a conservative estimate~\citep{kumar2020conservative}.
Safe OPG uses the same $\hat{q}$ as OPG (naive), but ensures that a lower bound of the policy value is above the safety threshold, which we set here as $C = 0.95 \cdot \hat{V}_{\mathrm{on}}(\pi_0) = 0.95 \cdot \mE_n [r_i]$. Note that we divide the logged data evenly into $\calD_0^{\mathrm{(S1)}}$ and $\calD_0^{\mathrm{(S2)}}$ to implement Safe OPG. All the methods above enforce user exploration via entropy regularization where we set $\alpha = 0.1$ in Eq.~\eqref{eq:entropy}.

\subsection{Results and Discussions} \label{sec:result1}
Figure~\ref{fig:baseline_aggregate} compares Safe OPG, OPG (naive), and OPG (w/ CQL) with varying optimality ($\beta$) of $\pi_0$. 
Specifically, Figure~\ref{fig:baseline_aggregate} (a) reports the policy value relative to that of the logging policy ($V(\pi) / V(\pi_0)$). Figure~\ref{fig:baseline_aggregate} (b) reports the probability a policy violating the safety constraint ($P(V(\pi) < C)$). Figure~\ref{fig:baseline_aggregate} (c) shows novelty ($N(\pi)$). As a reference, Figure~\ref{fig:baseline_aggregate} (c) includes the novelty of the optimal policy: $\pi^{\ast}(a | x) := \mathbb{I} \{a=\argmax_{a' \in \mathcal{A}} q(x, a')\}$. Note that we run the simulation with 30 different random seeds for each logging policy and calculate the metrics on 500,000 test samples. 

\paragraph{\textbf{(i) Can the OPL methods avoid the unsafe behavior under various logging policies?}}
Here, we analyze the performance and safety of different OPL methods.
First, we observe that OPG (naive) violates the safety constraint considerably when $\pi_0$ is near-optimal (i.e., $\beta\ge 8$). This is because $\hat{q}$ suffers from distributional shift in $\calD_0$, which results in a serious overestimation of the value of low quality actions.\footnote{We show the overestimation issue of the reward model $\hat{q}$ in detail in Appendix~\ref{app:experiment}.}
In addition, Figure~\ref{fig:baseline_aggregate} (b) indicates that OPG (w/ CQL) frequently violates the safety constraint when $\beta=-16,-8,0$. This is because CQL becomes overconfident about the value of sub-optimal supported actions if there are novel actions. These observations suggest that conventional methods become impractical in the presence of novel actions, potentially deploying unacceptably poor-performing policies.

In contrast, Figure~\ref{fig:baseline_aggregate} (b) shows that our Safe OPG never violates the safety constraint on a range of logging policies, indicating that Safe OPG can ensure safety using only the logged data and even in the presence of novel actions where the baseline methods can become very unsafe.

\paragraph{\textbf{(ii) Can the OPL methods choose and explore novel actions?}}
The previous section demonstrates that Safe OPG can avoid deploying catastrophic policies without any validation scheme (i.e., inherently ensuring safety during policy training). However, we also observe some critical drawbacks of Safe OPG in terms of its novelty.
Figure~\ref{fig:baseline_aggregate} (c) reveals that Safe OPG hardly chooses novel actions even though $\pi_0$ performs far worse than uniform random, while OPG (naive) and OPG (w/ CQL) choose novel actions much more frequently, leading to a better novelty metric. We attribute this phenomenon to the underestimation of the value of novel actions. Specifically, the regularization function of Safe OPG cannot evaluate novel actions due to the lack of support. This makes Safe OPG too conservative, leading to a substantial opportunity loss and unfair treatment of the novel actions.

\begin{table*}[htb]
\large
\centering
\caption{Conceptual comparison of DEPSUE with the online and offline policy learning framework} \label{tab:concept}
\vspace{-2mm}
\def\arraystretch{1.2}
\scalebox{0.8}{
\begin{tabular}{c|c|c|c}
\toprule
 & \textbf{Offline} & 
\textbf{DEPSUE (on top of Safe OPG)} & 
\textbf{Online} 
\\\midrule \midrule
safety 
& 
\begin{minipage}[b]{0.30\linewidth}
\centering
unsafe with novel actions
\end{minipage}
& 
\begin{minipage}[b]{0.30\linewidth}
\centering
guaranteed
\end{minipage}
&
\begin{minipage}[b]{0.30\linewidth}
\centering
unsafe in the initial learning
\end{minipage}
\\
novelty & insufficient & conservative, but gradually improves & sufficient \\
\# of deployments & only once ($K=1$), low cost & a few times ($K$), controllable cost & many times, high cost \\
\bottomrule
\end{tabular}
}
\end{table*}

\begin{figure*}[htbp]
  \centering
  \begin{minipage}[b]{0.60\linewidth}
  \centering
  \includegraphics[width=0.75\linewidth]{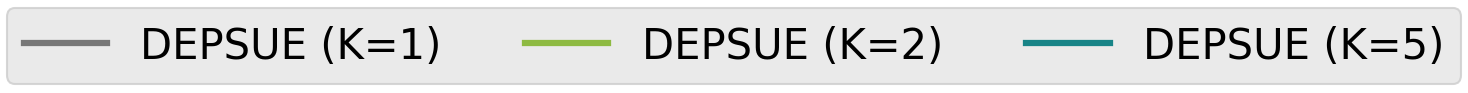}
  \end{minipage} \\
  \begin{minipage}[b]{0.80\linewidth}
  \centering
      \begin{minipage}[b]{0.38\linewidth}
        \centering
        \includegraphics[width=0.95\linewidth]{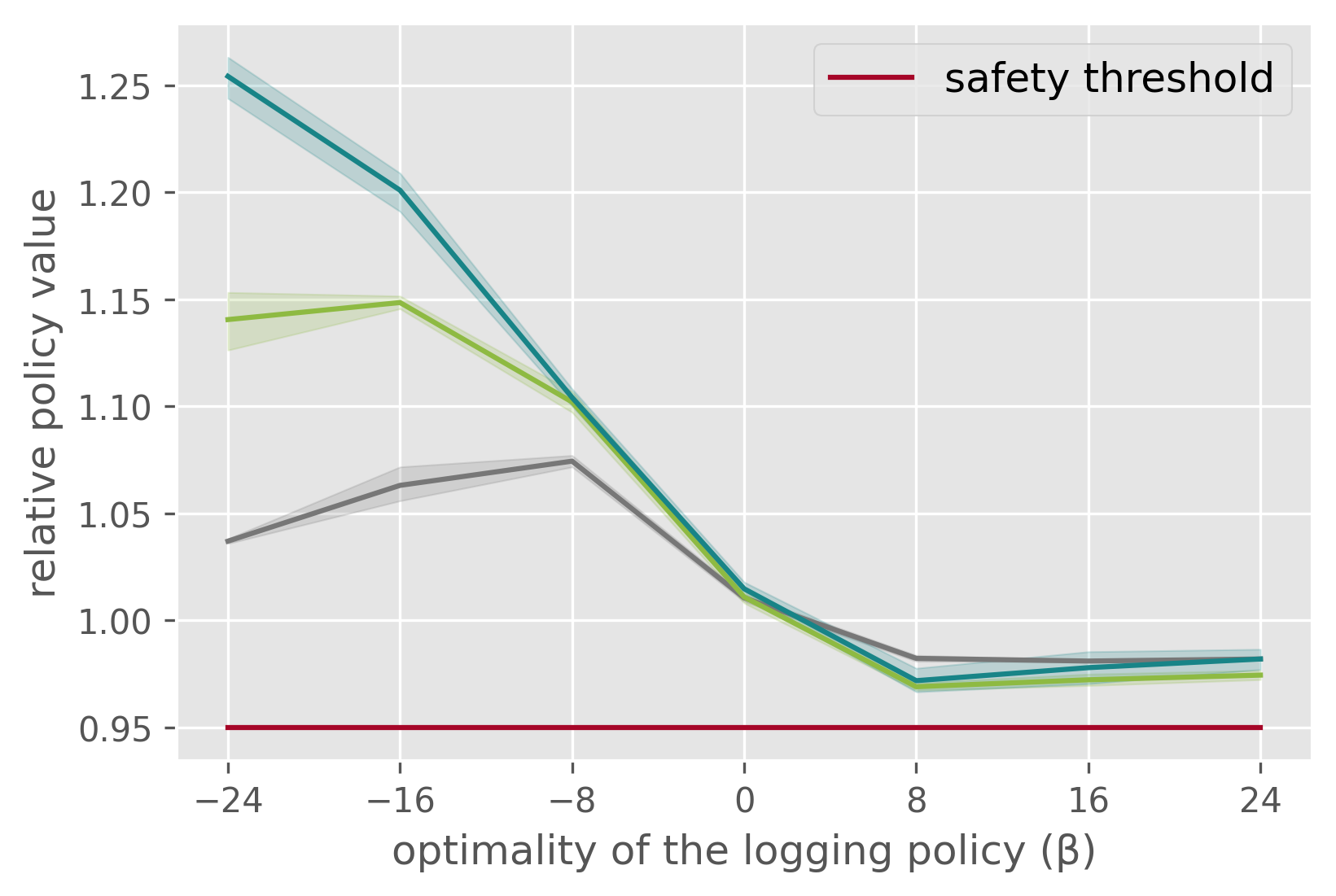}
        \\
        \textcolor{white}{xx} Relative Policy Value
      \end{minipage}
      \begin{minipage}[b]{0.38\linewidth}
        \centering
        \includegraphics[width=0.95\linewidth]{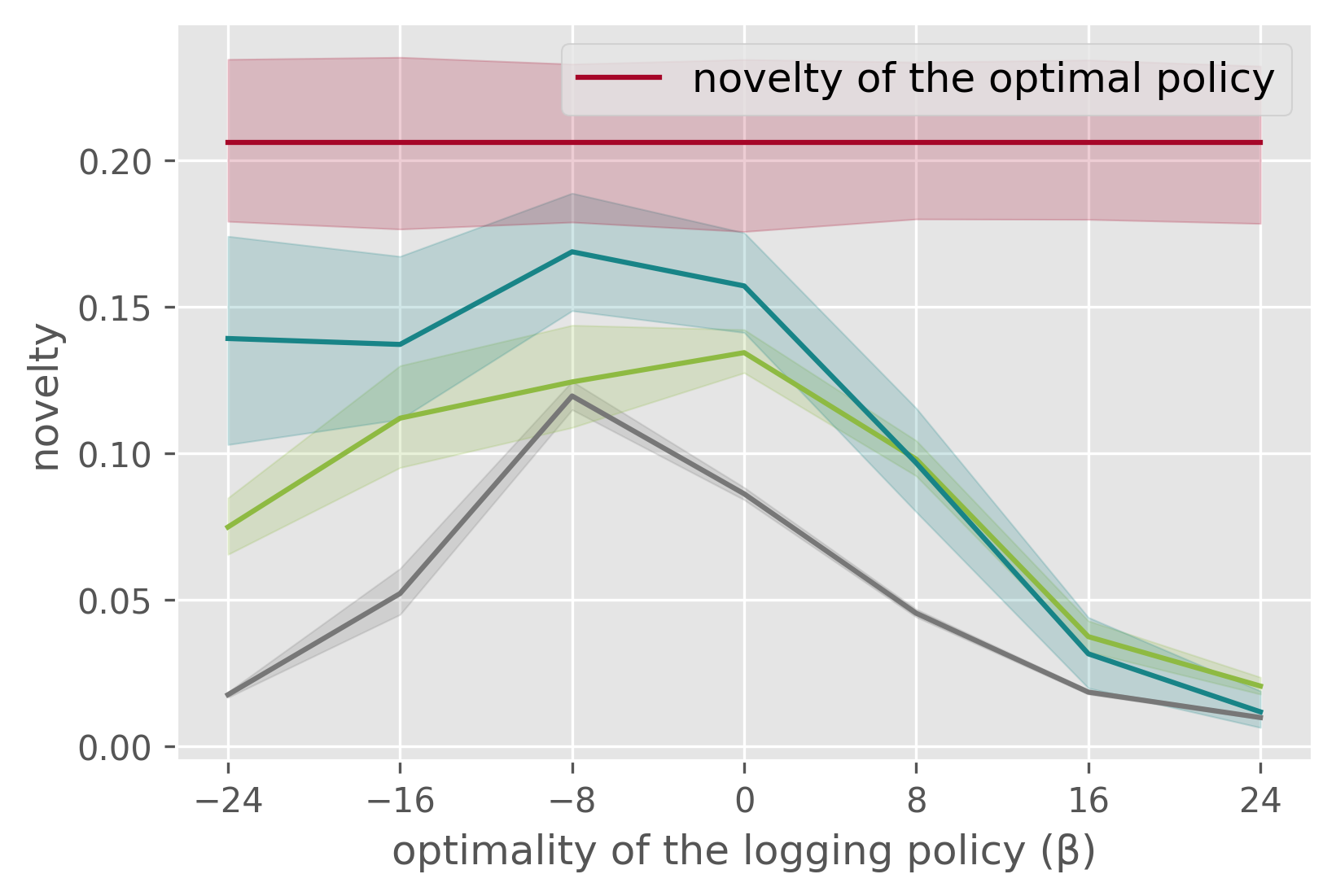}
        \\
        \textcolor{white}{xxx} Novelty
        \label{fig:one_stage}
      \end{minipage}
  \end{minipage}
  \caption{Evaluating safety and novelty of DEPSUE under various logging policies; (a) relative policy value compared to that of the logging policy ($V(\pi) / V(\pi_0)$) and (b) novelty, averaged over 30 simulation runs with 95\% confidence interval.}
  \label{fig:safe_aggregate}
\end{figure*}

\paragraph{\textbf{Summary of the tradeoff between safety and exploration}}
We have seen that the conventional OPL methods combined with entropy regularization can choose novel actions, but sometimes become unsafe and considerably underperform the logging policy. In contrast, Safe OPG succeeds in ensuring safety from only the logged data, but in turn hesitates to choose novel actions even when $\pi_0$ performs poorly. This tradeoff is attributed to the value estimation of the novel actions – overestimation easily leads to an unsafe policy, while underestimation is likely to result in a too conservative policy. Adjusting the balance between these competing motivations is, however, often challenging, as there is a large uncertainty in the value of novel actions in nature. In particular, guaranteeing safety is often demanding in the presence of novel actions, merely leading to an overly conservative policy that rarely chooses novel actions. This observation motivates us to further develop an easily implementable framework to enable a safe exploration of novel actions.

\section{Deployment-Efficient and Safe Exploration of Novel Actions} \label{sec:multi_stage}
The previous section sheds light on the difficulty in pursuing safety and exploration of novel actions at the same time. To overcome this challenging tradeoff, we now propose a new framework for safe exploration of novel actions called \textit{DEPSUE}. Motivated by \textit{deployment-efficient} policy learning in offline RL~\citep{matsushima2021deployment}, DEPSUE addresses the aforementioned tradeoff by gradually relaxing the regularization for safety leveraging what is called the \textit{safety margin} accumulated during multiple deployments. For example, if the policies in earlier stages exceed the safety threshold by a large margin, we should be able to relax the regularization and enforce a stronger user exploration in later deployments while still guaranteeing safety. Building on this intuition, our DEPSUE achieves safe user exploration while updating the policy $K$ times during its deployment. The choice of $K$ depends on the policy deployment cost and should appropriately be set by practitioners. Nonetheless, we will show that DEPSUE is highly effective in terms of novelty as well as the policy value, even if $K$ is quite small (say 2 to 5), making it much more easily implementable compared to online learning (from both safety and cost perspectives). The following provides the policy training procedure of DEPSUE for the $k$-th deployment policy $\pi_k$.
\begin{align}
    \max_{\pi_k \in \Pi} & \quad F_k(\pi_k; \calD_{k-1}^{\mathrm{(S1)}}) \label{eq:depsue_obj} \\
    \mathrm{s.t.} & \left\{ \begin{array}{ll}
                \hat{V}_{-}(\pi_k; \calD_0^{\mathrm{(S2)}}) > C & (k=1) \nonumber \\
                \hat{V}_{-}(\pi_k; \calD_{k-1}^{\mathrm{(S2)}}) + \sum_{k'=1}^{k-1} \hat{V}_{\mathrm{on}}(\pi_{k'}; \calD_{k'})  > kC & (k\ge2)
                \end{array} \right. \nonumber
\end{align}
where $\calD_k$ is the logged data collected by $\pi_k$ consisting of $m_k$ interactions. $\hat{V}_{\mathrm{on}}(\pi_k; \calD_k) := \mE_{m_k}[r_i]$ is the \textit{on-policy} policy value estimate of $\pi_k$. $F_k(\cdot)$ is virtually any objective for user exploration, such as the policy value plus entropy regularization as in Eq.~\eqref{eq:entropy}.

A key trick here is that DEPSUE trains $\pi_k$ so that the average policy value up to the $k$-th round is guaranteed to exceed the safety threshold (i.e., $\sum_{k'=1}^k V(\pi_k) \ge kC$) with high probability. This equivalently means that we impose $V(\pi_k) \ge kC - \sum_{k'=1}^{k-1} V(\pi_k)$ as a safety constraint for $\pi_k$, which can be considerably less restrictive if we have some positive \textit{safety margin} up until $k$-th deployment, i.e., $\sum_{k'=1}^{k-1} V(\pi_k) > (k-1)C$. In this way, we relax the regularization to explore novel actions more when possible while satisfying a safety requirement at \textit{every} deployment from $k=1$ to $k=K$. Table~\ref{tab:concept} compares the concept of DEPSUE with conventional offline and online learning frameworks.

\begin{table*}[h]
\begin{tabular}{cc}
\begin{minipage}{1.0\textwidth}
% \begin{table*}[t]
\large
\centering
\caption{Results of the Real-World Experiment on Wiki-31k}
\vspace{-2mm}
\def\arraystretch{1.2}
\scalebox{0.70}{
\begin{tabular}{cccccccccccccc}
\toprule
 \textbf{logging policies} && \multicolumn{3}{c}{\textbf{poor} ($\epsilon=0.8$)} && \multicolumn{3}{c}{\textbf{medium} ($\epsilon=0.5$)} && \multicolumn{3}{c}{\textbf{near-optimal} ($\epsilon=0.2$)} \\ \cmidrule{1-1} \cmidrule{3-5} \cmidrule{7-9} \cmidrule{11-13}
\textbf{OPL methods} && \textbf{Violation} & \textbf{Policy Value} & \textbf{Novelty} && \textbf{Violation} & \textbf{Policy Value} & \textbf{Novelty} && \textbf{Violation} & \textbf{Policy Value} & \textbf{Novelty} \\\midrule \midrule
\textbf{OPG} (naive)
&& \textcolor{blue}{$\mathbf{30/30}^{\dagger}$} & 0.837 & 0.167
&& \textcolor{blue}{$\mathbf{30/30}^{\dagger}$} & 0.674 & 0.123
&& \textcolor{blue}{$\mathbf{30/30}^{\dagger}$} & 0.563 & 0.131 \\
\textbf{OPG} (w/ CQL) 
&& \textcolor{blue}{$\mathbf{30/30}^{\dagger}$} & 0.843 & 0.284
&& \textcolor{blue}{$\mathbf{30/30}^{\dagger}$} & 0.689 & 0.254
&& \textcolor{blue}{$\mathbf{30/30}^{\dagger}$} & 0.592 & 0.301 \\ 
\textbf{Naive Safe Exploration}
&& 0/30 & 0.992 & 0.050
&& 0/30 & 0.984 & 0.050
&& 0/30 & 0.978 & \textcolor{dkred}{$\mathbf{0.050}^{\ast}$} \\
\midrule
\textbf{DEPSUE} ($K=1$) 
&& 0/30 & \textcolor{dkred}{$\mathbf{1.094}^{\ast}$} & 0.091
&& 0/30 & \textcolor{dkred}{$\mathbf{1.099}^{\ast}$} & 0.032
&& 0/30 & \textcolor{dkred}{$\mathbf{1.038}^{\ast}$} & 0.011 \\
\textbf{DEPSUE} ($K=2$) 
&& 0/30 & 1.057 & 0.100
&& 0/30 & 1.087 & 0.036
&& 0/30  & 1.030 & 0.011 \\ 
\textbf{DEPSUE} ($K=5$) 
&& 0/30 & 1.032 & \textcolor{dkred}{$\mathbf{0.108}^{\ast}$}
&& 0/30 & 1.010 & \textcolor{dkred}{$\mathbf{0.067}^{\ast}$}
&& 2/30 & 0.968 & 0.024 \\ 
\bottomrule
\end{tabular}
}
\vskip 0.1in
\raggedright
\fontsize{8.5pt}{8.5pt}\selectfont \textit{Note}:
A lower value is better for violation, while a higher value is better for the other metrics. 
The \textcolor{dkred}{$\mathbf{red^{\ast}}$} fonts indicate the best policy among those satisfying the safety constraint. The \textcolor{blue}{$\mathbf{blue^{\dagger}}$} fonts indicate policies that violate the safety constraint more frequently than $\delta$.
\label{tab:wiki}
% \end{table*}
\end{minipage}
\end{tabular}
\end{table*}

\section{Semi-Synthetic Experiment} \label{sec:experiment2}
Here, we demonstrate how DEPSUE handles the tradeoff between safety and exploration of novel actions using the same semi-synthetic setting as in Section~\ref{sec:experiment1}.
We evaluate DEPSUE with $K=1,2,5$, where $K=1$ is equivalent to Safe OPG in the previous experiment. 
Note that we (randomly) observe only $|\calD_k| = |\calD| /K$ observations at each deployment phase for a fair comparison.

\subsection{Results} \label{sec:result2}
Figure~\ref{fig:safe_aggregate} reports (a) relative policy value and (b) novelty of DEPSUE.
The results clearly demonstrate that DEPSUE has the promising potential to solve the tradeoff between safety and novelty observed in the previous experiment (Section~\ref{sec:experiment1}). First, we observe that DEPSUE gradually improves novelty as $K$ increases, while always satisfying the safety constraint.
Moreover, the value of DEPSUE is much higher than OPG (w/ CQL) when $\beta < 0$. 
These results suggest that DEPSUE successfully presents novel actions and enhances novelty when the logging policy is bad, while ensuring safety under near-optimal logging policies. It is remarkable to see that our framework automatically explores novel actions only when possible, even though the optimality of the logging policy is totally unknown, and we argue that DEPSUE (combined with Safe OPG) is the only policy learning framework that enables sufficient exploration of novel actions, guarantees safety in a range of logging policies, and is much more easily implementable compared to online learning. 

\section{Real-World Experiment}
This section further evaluates DEPSUE using the extreme classification dataset called Wiki10-31K~\citep{bhatia16,yu2022pecos}\footnote{https://github.com/amzn/pecos/tree/mainline/examples/spmm\#data-statistics}. Following previous works on OPL~\citep{dudik2014doubly, farajtabar2018more}, we convert the classification data into bandit data with partially observable rewards.

\subsection{Setup}
Wiki10-31K consists of approximately 20K documents with 31K labels. Each document is associated with multiple positive labels. Since some of the labels are positive for most of the documents, we first remove the labels that are positive for more than 1K documents. Then, we extract 10K frequent labels as actions, 8K of which constitute supported actions $\calA_0$ and the rest is deemed novel. We use only the documents that have at least one positive label among $\calA$. To preprocess the document features (contexts), we first obtain 578 dimensional embeddings from a pretrained-bert language model~\citep{reimers2019sentence}.\footnote{``bert-base-nli-mean-tokens'' with max\_seq\_length=512} Then, we compress the embeddings into 20-dimensional context $x$ via PCA~\citep{pearson1901liii}.
Among them, 80\% of the documents are used only for policy training, while the others are used for testing. Finally, we sample binary rewards from the Bernoulli distribution, where we let $q(x, a) = 0.8$ for the actions with a positive label and $q(x, a) = 0.2$ for negatives. We should note that this setting is extremely challenging compared to the previous semi-synthetic setting due to the scarcity of positive labels.

To obtain a logging policy $\pi_0$ and action embeddings $e_a$, we first train a supervised classifier with the cross entropy loss. Following~\citet{ma2020off}, we use a two-tower model as the classifier, which first individually encodes the context and action into 20-dimensional vectors and produces $\mathrm{logit}(x, a)$ from their inner product. Note that the obtained action embeddings $e_a$ are used for training the reward model $\hat{q}$, but not for policy training.
After learning a classifier, we define the following epsilon-greedy-style logging policy based on the estimated class label $\hat{a} \, (\in \calA_0)$ as follows.
\begin{align*}
    \pi_0(x, a) := 
    \begin{cases}
        (1 - \epsilon) \, \mathbb{I}\{a = \hat{a}\} + \epsilon / |\calA_0| & (a \in \calA_0) \\
        0 & (a \in \calA \setminus \calA_0)
    \end{cases}
\end{align*}
where $\epsilon$ is the degree of exploration. We set $\epsilon=0.8, 0.5, 0.2$, and call the corresponding logging policies \textit{poor}, \textit{medium}, and \textit{near-optimal}, respectively.\footnote{All logging policies perform better than uniform random.} Since positive labels are scarce, we clip $\hat{V}_{\mathrm{on}}(\pi_k)$ in calculating the safety margin in DEPSUE by the maximum value of $1.05 \cdot V(\pi_0)$. The other settings follow Section~\ref{sec:experiment1}, except that we additionally compare \textit{Naive Safe Exploration}, which follows $\pi_0$ with probability $0.95$ to ensure safety and uniformly explores novel actions with probability $0.05$.

\subsection{Results}
Table~\ref{tab:wiki} compares the violation (frequency of each policy violating the safety constraint), policy value, and novelty of OPG (naive), OPG (CQL), and DEPSUE ($K=1,2,5$). These metrics are obtained by running the policy learning experiment 30 times with different random seeds. First, we observe that the baseline OPGs violate the safety constraint in every simulation run (30/30), indicating that guaranteeing safety by a model-based approach is almost impossible in the presence of novel actions.

In contrast, DEPSUE satisfies the safety constraint perfectly. Moreover, DEPSUE enhances its novelty gradually with increasing $K$, suggesting that DEPSUE enables a steerable tradeoff among safety, novelty, and implementation cost. Specifically, we observe that DEPSUE starts from a conservative policy and relaxes the regularization to explore novel actions only when there is enough safety margin. Furthermore, DEPSUE explores more novel actions than naive safe exploration in poor and medium policy cases. DEPSUE is also better than naive safe exploration in terms of policy value. This suggests that adaptively controlling the safety regularization is the key to achieving an efficient safe exploration of novel actions.

\section{Conclusion and Future Work}
We have studied and overcome the fundamental tradeoff in safety and exploration of novel actions.
Our central motivation comes from the empirical observation that the existing framework for user exploration can substantially underperform the logging policy in the presence of novel actions. To guarantee safety in a modular way, we first develop Safe OPG, which uses only offline logged data to ensure that a lower bound of the policy performance is above a given threshold even when novel actions exist. In the first set of experiments, we showed that Safe OPG satisfies the safety constraint. However, we also identified an inevitable tradeoff between safety and exploration. To overcome this tradeoff, we also propose a new high-level policy learning framework called DEPSUE to enable safe exploration of novel actions. By controlling the regularization for safety during multiple deployments, DEPSUE gradually enhances novelty without ignoring the safety requirement. Additional empirical results demonstrate that DEPSUE addresses the difficult tradeoff of safety and exploration in a practically implementable way.

An interesting direction for future work is to consider a more elaborate use of action features in our framework. 
For example, it would be valuable to utilize action features in deriving a tighter lower bound of the policy value to avoid being overly conservative in ensuring safety.
Moreover, it would be interesting to develop a more efficient strategy that leverages action features, beyond the mere entropy regularization, for exploring novel actions.

\bibliography{main.bbl}
\clearpage
\appendix

\begin{figure*}
    \centering
    \includegraphics[width=0.80\linewidth]{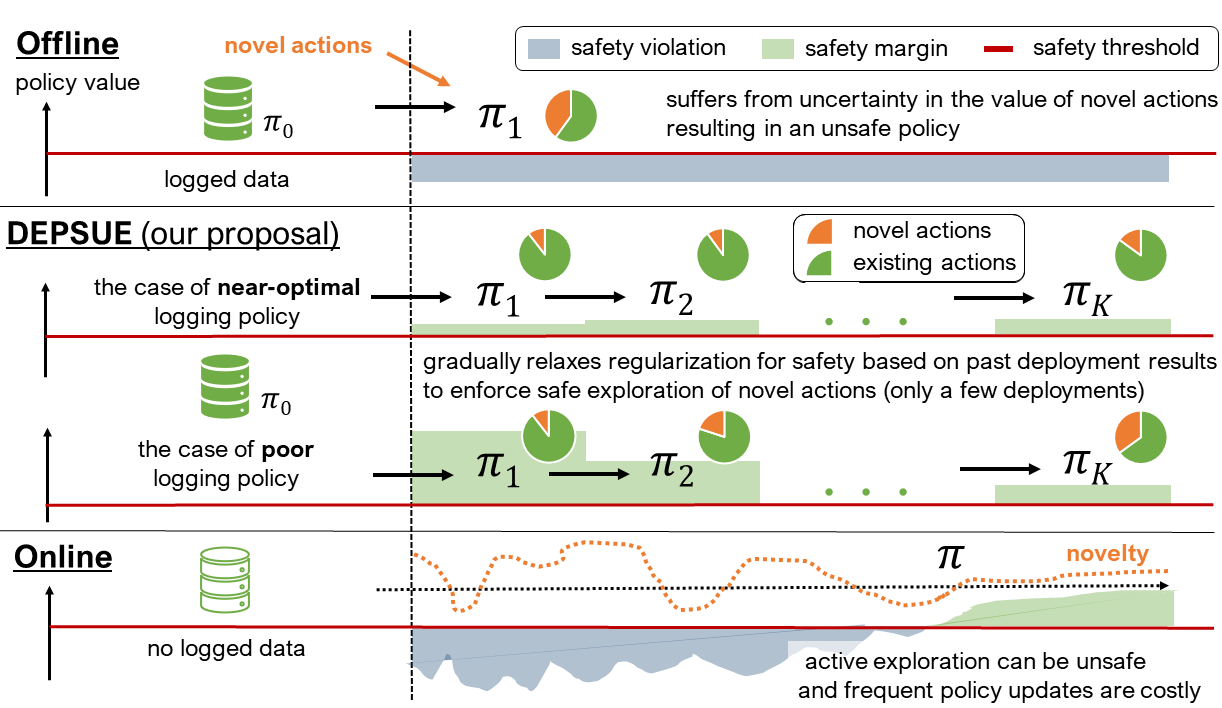}
    \caption{Overview of Safe and Deployment-Efficient Policy Learning for User Exploration (DEPSUE), which gradually relaxes the safety regularization to sufficiently explore novel actions while guaranteeing safety.}
    \label{fig:depsue}
\end{figure*}

\section{Related Work} \label{app:related}

\paragraph{\textbf{Strategic and User Exploration}}
The tradeoff between exploration and exploitation has been a central topic of bandits and RL. The most well-known techniques for the \textit{strategic} exploration include Upper Confidence Bound~\citep{lai1985asymptotically} and Thompson Sampling~\citep{chapelle2011empirical}. The motivation for these methods for strategic exploration is to collect information about actions that the algorithms are less certain about. In this way, bandits and RL policies gain better model quality at a later time to achieve a better regret than greedy exploitation.

This classical strategic exploration is believed to cost user experience in the short term. However, another aspect of exploration, referred to as \textit{user exploration}, has recently been established in the context of recommender systems to discover novel user interests~\citep{chen2021values}.
Indeed, it has become increasingly recognized that presenting a novel and diverse set of actions via user exploration is valuable in terms of long-term objectives in recommender systems, as it fosters diversity, novelty, and fairness among items~\citep{kaminskas2016diversity, lam2008addressing, singh2018fairness, wang2021fairness}. Note here that diversity is applied over a set
of items and refers to how different these items are. In contrast, novelty refers to how novel an item is for a user, given what the user has already seen.
A large volume of work has indicated the importance of enforcing diversity and novelty on real online platforms~\citep{abdool2020managing,anderson2020algorithmic,chen2021values,mehrotra2021algorithmic}, proposed novel evaluation metrics incorporating accuracy, diversity, and novelty~\citep{abdollahpouri2021user,chandar2013preference,clarke2008novelty,parapar2021towards}, developed novel algorithms for user exploration~\citep{li2020cascading,liu2020diversified,parapar2021diverse,stamenkovic2022choosing,wilhelm2018practical}.
In particular, \citet{stamenkovic2022choosing} propose a multi-objective online RL framework called SMORL with the aim of optimizing the accuracy, diversity, and novelty of recommendations simultaneously.
In contrast, \citet{chen2021values} formulate user exploration on top of OPL and incorporate entropy regularization, intrinstic motivaton, and reward shaping to the typical objective. \citet{chen2021values} also empirically verify that user exploration in recommender systems improves long-term user satisfaction by pursuing novelty and serendipity.
However, as we argue in Section~\ref{sec:experiment1}, these previous works on policy learning for user exploration do not take \textit{safety} into account. Although safe policy learning has been studied as we will discuss, improving diversity and novelty while guaranteeing safety \textit{at the same time} still remains less-explored despite its practical relevance. We are the first to study and highlight the critical tradeoff between enhancing user exploration and guaranteeing safety.

Note that there is a line of work studying \textit{safe exploration} in an online learning setup~\citep{bura2021safe, pacchiano2021stochastic,wachi2018safe} and Bayesian optimization~\citep{sui2015safe}. However, the type of exploration of these works is the classical strategic exploration. Our motivation to explore is somewhat different, and we aim at improving the overall user experience in recommender systems by presenting previously unseen actions and discovering potentially better options for users. Thus, we pursue safety and user exploration (not strategic exploration) in the OPL setup and propose a practically implementable and safe method to balance the conflicting objectives of value 
%, diversity, 
and novelty.

\paragraph{\textbf{Off-Policy Learning with Novel Actions}}
OPL from logged data is a prominent and practical approach for policy learning, as it does not require any risky and costly online interactions~\citep{levine2020offline,saito2021counterfactual}. 
Conventional methods can be categorized into value-based~\citep{jeunen2021pessimistic, kumar2020conservative,le2019batch} and gradient-based~\citep{chen2019top, joachims2018deep, sachdeva2020off,swaminathan2015batch, swaminathan2015self} approaches. Value-based OPL trains a reward model based on the logged data and uses the predicted rewards to make downstream decisions.
Although this approach works reasonably well if the reward prediction is accurate, offline reward prediction often fails to extrapolate due to distributional shift in the logged data~\citep{saito2021evaluating}.
In particular, overestimation of the value of unseen actions is problematic, as it may lead to a poor policy~\citep{fujimoto2019off}.
To avoid the overestimation issue, recent literature focuses more on conservative value estimates~\citep{jeunen2021pessimistic, kumar2019stabilizing, kumar2020conservative}. However, this approach can also be unsafe in the presence of novel actions as demonstrated in our experiments.
Gradient-based approach is a different paradigm based on an unbiased policy gradient estimated via importance sampling~\citep{precup2000eligibility, strehl2010learning}. The benefit of using an unbiased policy gradient is that it steadily improves the policy value with growing sample size. However, an unbiased policy gradient heavily depends on the so-called \textit{full support} assumption, which requires the logging policy to allocate non-zero probabilities to all context-action pairs, which is arguably broken when novel actions exist. This makes the learned policy too conservative in our setup, and as a result, it fails to explore and identify high-quality novel actions~\citep{london2020offline, sachdeva2020off}.

One of the related work,~\citet{sachdeva2020off}, study the OPL problem under \textit{deficient support}. Deficient support is a relaxed case of our setting where there is no novel action, but some context-action pairs have zero probability under the logging policy.~\citet{sachdeva2020off} compare several techniques including the one penalizing the policy that is likely to choose deficient actions. This strategy of being conservative about deficient actions is promising to maximize the policy value. However, this method prefers a policy similar to the logging policy, suggesting the difficulty of ensuring safety and exploring novel actions at the same time. To discover high-quality deficient actions, \citet{tran2021combining} propose to explore uncertain actions in the online environment until the policy becomes confident enough about the value of deficient actions.
While this method works with less number of online interactions compared to fully online learning, exploration in the initial phase can still be unsafe. 
Moreover, frequent update of a policy incurs large implementation costs, making this type of \textit{offline-to-online} policy learning methods, including~\citep{li2021unifying, xu2021safely}, often infeasible.

Compared to these related methods, our approach works with much more reasonable implementation efforts and still succeeds in pursuing novelty while guaranteeing safety. We achieve this by allowing only a few additional policy deployments to leverage the safety margin to strengthen user exploration when possible. The idea of allowing only several deployments is closely related to the concept of \textit{deployment efficient} policy learning first formulated by~\citet{matsushima2021deployment} and later analyzed theoretically by~\citet{huang2022towards}. This framework allows deploying a few number of policies to outperform fully offline policy in terms of policy value in a tractable and easily implementable way. We take a step further and tackle a more challenging problem of exploring completely novel actions while guaranteeing safety at the same time, which is made possible with our Safe OPG and carefully designed safety constraints in DEPSUE.

\paragraph{\textbf{Safe Policy Learning}}
In many applications of bandits and RL, including recommender systems, robotics, autonomous driving, and precision medicine, \textit{guaranteeing} safety is often critical to prevent catastrophic failures (such as car accidents)~\citep{dulac2020empirical, garcia2015comprehensive,xu2021constraints,wachi2018safe}. This motivates various algorithms for solving a constrained policy learning problem~\citep{achiam2017constrained, laroche2019safe, xu2021crpo, yu2019convergent}. For instance, the primal-dual approach casts the policy optimization problem into a minmax optimization via Lagrangian relaxation~\citep{bertsekas1997nonlinear, chow2017risk, liang2018accelerated}.
\citet{paternain2019constrained} justified the effectiveness of primal-dual methods, in
which zero duality gap is guaranteed. A recent work also established the convergence rate of the primal-dual method under Slater’s condition assumption~\citep{ding2021provably}. \citet{yu2019convergent}
proposed a constrained policy gradient algorithm with convergence guarantee by solving a sequence of surrogate convex constrained optimization problems. \citet{xu2021crpo} propose an algorithm, which is implementable as easy as unconstrained policy optimization and has a global optimality guarantee as well.
In the offline setting, a few safe RL methods exist, solving the primal-dual optimization by estimating the safety constraint with model-based predictors~\citep{le2019batch, xu2021constraints}. In particular, \citet{xu2021constraints} develop Constraints Penalized Q-Learning (CPQ), the first continuous control RL algorithm capable of learning from mixed offline data under a safety constraint. However, these previous methods for safe offline RL rely on a safety constraint accurately approximated with only offline data. This should be avoided in particular when novel actions exist. This is because, if we fail to accurately approximate the constraint, safety may no longer be ensured in the test time. In contrast, our Safe OPG is developed specifically for the seting with novel actions. To our knowledge, ours is the first attempt to enforce safety in a way to be applicable to the setting with novel actions. More specifically, Safe OPG builds on top of HCOPE~\citep{thomas2015confidence, thomas2015high} and adopts an outer-loop validation scheme during training so that the policy is independent of the data used to test the safety condition. This validation scheme ensures that HCOPE lower bounds the policy value with high probability without relying on an accurate model-based approximation of the safety constraint as done in previous work.

\section{Convergence analysis} \label{app:proof}
Here, we theoretically analyze the convergence of Safe OPG.
To prepare, we first assume three assumptions.

The first assumption assumes that the objective function is bounded.
\begin{assumption} (Boundedness of the objective function) \label{assm:bound}
    The objective function is bounded, s.t., $\hat{V}(\pi; \calD_0^{\mathrm{(S1)}}) < \infty$.
\end{assumption}

Then, we assume that there exists at least one policy that satisfies the safety constraint in the predefined policy class.
\begin{assumption} (Realizability of the safety constraint) \label{assm:constraint}
    The safety constraint is realizable, s.t., $\exists \pi \in \Pi, \mathrm{s.t.,} \hat{V}_{-}(\pi; \calD_0^{\mathrm{(S2)}}) > C$.
\end{assumption}

Finally, we also assume that the maximization of the regularization function leads to the constraint satisfaction.
\begin{assumption} (Generalization of the regularization function) \label{assm:regularization}
    The regularization function $\mathcal{R}(\cdot)$ is generalizable, s.t., $\exists C'$,
    \begin{align*}
        \mathcal{R}(\pi; \calD_0^{\mathrm{(S1)}}) > C' \Rightarrow \hat{V}_{-}(\pi; \calD_0^{\mathrm{(S2)}}) > C.
    \end{align*}
\end{assumption}

Under Assumption~\ref{assm:bound}-\ref{assm:regularization}, we show that the convergence point (i.e., local saddle point) of Safe OPG exists.
\begin{theorem} (Existence of a local saddle point) \label{thrm:saddle_point}
    Under the Assumptions~\ref{assm:bound}-\ref{assm:regularization}, there exists a local saddle point $(\pi^{\star}, \lambda^{\star})$ that satisfies, given $\calD_0^{\mathrm{(S1)}}$, $\forall \epsilon > 0, \pi \in \Pi, \lambda \in \mathbb{R}_{+}$,
    \begin{align*}
        \mathcal{L}(\pi; \lambda^{\star}, \calD_0^{\mathrm{(S1)}}) \leq \mathcal{L}(\pi^{\star}; \lambda^{\star}, \calD_0^{\mathrm{(S1)}}) \leq \mathcal{L}(\pi; \lambda, \calD_0^{\mathrm{(S1)}}).
    \end{align*}
\end{theorem}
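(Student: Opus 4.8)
The plan is to treat \Eqref{eq:minmax} as the Lagrangian relaxation of the constrained program \Eqref{eq:constrained_lower_bound} and to establish the existence of a saddle point by a primal--dual argument, with the three assumptions controlling, in order, the inner maximization over $\pi$, the non-vacuity of the problem, and the finiteness of the optimal multiplier $\lambda^{\star}$. The first observation I would make is that the regularizer satisfies $\mathcal{R}(\pi; \calD_0^{(S1)}) = \mE_{n}[r_i \log \pi(a_i|x_i)] \le 0$ for every $\pi \in \Pi$, because $r_i \ge 0$ and $\log \pi(a_i|x_i) \le 0$. Hence for each fixed $\lambda \ge 0$ we have $\mathcal{L}(\pi; \lambda, \calD_0^{(S1)}) \le \hat{V}(\pi; \calD_0^{(S1)})$, which is bounded above by \Asmref{assm:bound}; together with continuity of $\mathcal{L}(\cdot;\lambda,\calD_0^{(S1)})$ and a (restricted) compact parameter region for $\Pi$, this guarantees that the inner problem $g(\lambda) := \max_{\pi \in \Pi} \mathcal{L}(\pi;\lambda,\calD_0^{(S1)})$ is finite and attains a local maximizer $\pi_{\lambda}$.

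Next I would locate a finite optimal multiplier. Since $g$ is a pointwise supremum of functions that are affine in $\lambda$, it is convex on $[0,\infty)$. By \Asmref{assm:regularization} there is a threshold $C'$ above which the value of $\mathcal{R}$ already implies satisfaction of the HCOPE safety constraint $\hat{V}_{-}(\pi;\calD_0^{(S2)}) > C$, and by \Asmref{assm:constraint} this feasible set is non-empty, so the penalized problem is not degenerate. The key step is to show that, once $\lambda$ is large enough that the penalty $\lambda\mathcal{R}$ forces the maximizer $\pi_{\lambda}$ to have $\mathcal{R}(\pi_{\lambda};\calD_0^{(S1)}) > C'$, the one-sided derivative of the convex function $g$ becomes non-negative; hence the minimum $\lambda^{\star} := \argmin_{\lambda \ge 0} g(\lambda)$ is attained at a finite value. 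Setting $\pi^{\star}$ to be a local maximizer of $\mathcal{L}(\cdot;\lambda^{\star},\calD_0^{(S1)})$, the pair $(\pi^{\star},\lambda^{\star})$ is my candidate saddle point. The left inequality, $\mathcal{L}(\pi;\lambda^{\star},\calD_0^{(S1)}) \le \mathcal{L}(\pi^{\star};\lambda^{\star},\calD_0^{(S1)})$ for $\pi$ near $\pi^{\star}$, is immediate from local maximality. For the right inequality I would use an envelope/complementary-slackness argument: optimality of $\lambda^{\star}$ for the convex outer problem, combined with Danskin's theorem ($g'(\lambda^{\star}) = \mathcal{R}(\pi^{\star};\calD_0^{(S1)})$) and the sign constraint $\mathcal{R} \le 0$, forces $\mathcal{R}(\pi^{\star};\calD_0^{(S1)}) = 0$, so that $\mathcal{L}(\pi^{\star};\cdot,\calD_0^{(S1)})$ is constant in $\lambda$ and the inequality $\mathcal{L}(\pi^{\star};\lambda^{\star},\calD_0^{(S1)}) \le \mathcal{L}(\pi^{\star};\lambda,\calD_0^{(S1)})$ holds for all admissible $\lambda$.

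I expect the main obstacle to be the non-convexity of the policy class $\Pi$ (parameterized by a neural network): the convex-in-$\lambda$ structure of the dual function is clean, but the inner maximization over $\pi$ has no global guarantee, which is exactly why only a \emph{local} saddle point can be claimed and why every optimality statement above must be localized and made consistent across the two coordinates. A related subtlety is ensuring $\lambda^{\star}$ is finite rather than $+\infty$; this is where \Asmref{assm:regularization} does the essential work, acting as an exact-penalty / generalization condition that converts the value of the surrogate regularizer $\mathcal{R}$ into satisfaction of the actual HCOPE-based safety constraint of \Eqref{eq:lower_bound}.
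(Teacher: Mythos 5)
Your overall route---read \Eqref{eq:minmax} as the Lagrangian relaxation of \Eqref{eq:constrained_lower_bound}, let \Asmref{assm:bound} control the inner maximization, \Asmref{assm:constraint} guarantee non-vacuity, and \Asmref{assm:regularization} convert the surrogate regularizer into the HCOPE constraint---is the same as the paper's, which swaps the constraint $\hat{V}_{-}(\pi;\calD_0^{\mathrm{(S2)}})>C$ for $\mathcal{R}(\pi;\calD_0^{\mathrm{(S1)}})>C'$ via \Asmref{assm:regularization}, forms the Lagrangian of that surrogate problem in \Eqref{eq:dual_regularization}, and cites \citet{bertsekas1997nonlinear} for existence of a local saddle point. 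The substantive difference is that the paper's Lagrangian carries the offset $-\lambda C'$, whereas you work throughout with $\mathcal{L}(\pi;\lambda)=\hat{V}(\pi)+\lambda\mathcal{R}(\pi)$ exactly as in \Eqref{eq:minmax}, and this is where your dual argument breaks.

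Concretely: with $g(\lambda):=\max_{\pi}\bigl(\hat{V}(\pi)+\lambda\mathcal{R}(\pi)\bigr)$, Danskin gives $g'(\lambda)=\mathcal{R}(\pi_{\lambda})$, which by your own first observation is $\leq 0$ for \emph{every} $\lambda$; so $g$ is non-increasing everywhere, and your claim that its one-sided derivative ``becomes non-negative once $\mathcal{R}(\pi_{\lambda})>C'$'' is false, because $C'$ must be negative for the surrogate constraint to be feasible at all (given $\mathcal{R}\leq 0$). The derivative you actually want, $\mathcal{R}(\pi_{\lambda})-C'$, belongs to the offset Lagrangian $\hat{V}+\lambda(\mathcal{R}-C')$ of \Eqref{eq:dual_regularization}; for that function your finiteness and stationarity argument does go through and yields $\mathcal{R}(\pi^{\star})=C'$ (or $\lambda^{\star}=0$), which is attainable. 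For the un-offset $\mathcal{L}$ of the theorem statement, your complementary-slackness step instead forces $\mathcal{R}(\pi^{\star})=0$, i.e.\ $\pi^{\star}(a_i|x_i)=1$ on every logged sample with $r_i>0$---a deterministic imitation policy that a softmax-parameterized class never realizes exactly---so $g$ is strictly decreasing, the minimizing $\lambda^{\star}$ is not attained, and no saddle point of $\mathcal{L}$ exists in the $\lambda$-coordinate by this argument. The repair is simply to carry the $-\lambda C'$ term, which changes nothing in the maximization over $\pi$ but everything in the minimization over $\lambda$; the paper does this implicitly by proving the saddle-point property for \Eqref{eq:dual_regularization} rather than literally for \Eqref{eq:minmax}.
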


\begin{proof}
Assumption~\ref{assm:regularization} ensures that the solution of the following constrained optimization problem can also be a solution to that of Eq.~\eqref{eq:constrained_lower_bound}.
\begin{align}
    \begin{array}{cl}
        \displaystyle \max_{\pi \in \Pi} & \hat{V}(\pi; \calD_0^{\mathrm{(S1)}}) \\
        \mathrm{s.t.} & \mathcal{R}(\pi; \calD_0^{\mathrm{(S1)}}) > C'
    \end{array} \label{eq:regularization_constrained}
\end{align}
Then, using the Lagrangian relaxation technique, Eq~\eqref{eq:regularization_constrained} is transformed to the following minmax optimization problem.
\begin{align}
    \min_{\lambda \in \mathbb{R}_{+}} \max_{\pi \in \Pi} \, \hat{V}(\pi; \calD_0^{\mathrm{(S1)}}) + \lambda \left( \mathcal{R}(\pi; \calD_0^{\mathrm{(S1)}}) - C' \right)
    \label{eq:dual_regularization}
\end{align}
\citet{bertsekas1997nonlinear} shows that a local saddle point ($\pi^{\star}$, $\lambda^{\star}$) exists for Eq.~\eqref{eq:dual_regularization} and that $\pi^{\star}$ satisfy the given safety constraint in Eq.~\eqref{eq:regularization_constrained} under Assumption~\ref{assm:bound} and ~\ref{assm:constraint}.
\end{proof}

\begin{table*}[htb]
\large
\centering
\caption{Policy value and relative optimality of $\pi_0$} \label{tab:behavior}
\def\arraystretch{1.2}
\scalebox{0.8}{
\begin{tabular}{c|cccccccc}
\toprule
\textbf{logging policies} & $\beta=-24$ & 
$\beta=-16$ & 
$\beta=-8$ & 
$\beta=0$ &
$\beta=8$ & 
$\beta=16$ & 
$\beta=24$ &
\\\midrule \midrule
$V(\pi_0)$ & 0.200 & 0.213 & 0.234 & 0.253 & 0.268 & 0.280 & 0.289 \\
$V(\pi_0) / V(\pi^{\ast})$ & 0.647 & 0.690 & 0.757 & 0.820 & 0.869 & 0.908 & 0.938 \\
\bottomrule
\end{tabular}
}
\end{table*}

\section{Details and additional results of the semi-synthetic experiment} \label{app:experiment}
Here, we describe some additional experimental setups omitted in the main text and report some additional experimental results.

\subsection{Additional Settings}
Our implementation uses \textit{PyTorch}~\citep{paszke2019pytorch}.

\paragraph{\textbf{Simulator}} Following \citet{ma2020off}, we define simulator based on embedding modules and a reward prediction module. First, we transform each categorical feature of the user demographics and item features to 5-dimensional embeddings and concatenate them to 10-dimensional embeddings which are unique for each user and movie. Consequently, we obtain 30-dimensional context ($x$) and 20-dimensional action context ($e_a$), respectively. We also define $q(x, e_a)$ with a 3-layer MLP with ReLU activation. The latent dimensions are 100 and 50, respectively. 
We train both embedding modules and reward prediction module simultaneously to minimize the binary cross entropy loss in predicting binarized ratings of the movielens dataset.

\paragraph{\textbf{Logging Policy}}
Table~\ref{tab:behavior} shows the performance and the relative optimality of the logging policy compared to the optimal one ($V(\pi_0) / V(\pi^{\ast})$) averaged over 30 random seeds.

\paragraph{\textbf{Reward Model}}
All reward models $\hat{q}$ (both naive and CQL) use the same 3-layer MLP with ReLU activation, whose latent dimensions are 100 and 10. The naive reward model estimates the mean reward function via emsembling as $\sum_{j=1}^5 \hat{q}_j / 5$ and $\min_{j \in [5]} \hat{q}_j$, where each $\hat{q}_j$ is trained on the bootstrapped logged data to minimize the binary cross entropy loss. In contrast, CQL aims to mitigate the overestimation of uncertain actions as follows.
\begin{align*}
    \hat{q} & \leftarrow \min_{q} \mE_n [ (r_i - \hat{q}(x_i, a_i))^2 ] \\
    & \quad + \alpha \mE_n [ \max_{\pi'} \hat{q}(x_i, \pi'(x_i)) - \hat{q}(x_i, a_i)) ],
\end{align*}
where the first term minimizes the estimation error. To learn $\hat{q}$ in a manner similar to the naive one, we replace the MSE loss (the first term) to the binary cross entropy loss in our experiment. In contrast, the CQL-specific loss function is the second term, which penalizes the overestimation of the values of uncertain actions. We set $\alpha=2.0$ to balance the two objectives. Although the second term of CQL originally minimizes the overestimation in the worst case in terms of policy (i.e., $\max_{\pi'}$), this term can be computationally costly to calculate the gradient when $|\calA|$ is large. Therefore, we simply substitute $\max_{\pi'}$ to the uniform random policy and randomly sample negative actions ($\pi'(x_i)$) in our experiment. We show the estimation result of each reward model in Figure~\ref{fig:true_reward}-\ref{fig:cql_reward}. We also discuss their relation to the OPL results in Appendix~\ref{app:results}.

\paragraph{\textbf{Policy Network and Parameters}}
Policy network $f_{\psi}$ consists of a 2-layer MLP with ReLU activation, which maps $d_x$-dimensional input (i.e., $x$) to 100 dimension latent vector, and then outputs $|A|$-dimensional values (i.e., $f_{\psi}(x, a)$) used for softmax. We use SGD~\citep{bottou2010large} as its optimizer, with the learning rate $\eta_{\psi}=0.1$ (naive OPGs) and $\eta_{\psi}=0.001$ (Safe OPG), based on the convergence analysis.
We also set $S=10,000$ (gradient steps) and $\eta_{\lambda}=0.01$ (step size to adjust $\lambda$). Note that, to further alleviate the vanishment of the gradient, we multiple $1 / \max_{(x, a) \in \calX, \calA} \pi(a|x)$ to the gradient of all compared methods.

\paragraph{\textbf{Validation with OPE estimators}}
To see if OPE can validate the safety of the policies learned by OPG, we additionally conduct an OPE experiment. In this experiment, we use 50,000 samples of the validation dataset collected by $\pi_0$, which is independent from the training dataset. In the hypothesis test, we set $H_0: V(\pi) = V(\pi_0)$ and $H_1: V(\pi) > V(\pi_0)$ as the null and alternative hypothesis, respectively. Using the validation OPE estimate $\hat{V}(\cdot)$, we regard the sample with $\hat{V}(\pi; \calD_0) > V_{\mathrm{on}}(\pi_0; \calD_0)$ as positive, and otherwise regard the one as negative. 
We calculate the Type I and Type II error rates of each OPE estimator over the policies learned from various logging policies (i.e., total 210 samples of $\pi$ learned in the OPL experiment, which consists of seven different values of $\beta$ and 30 random seeds for each). 

We use Direct Method (DM), Inverse Propensity Scoring (IPS), and Doubly Robust (DR) to evaluate the learned policies. DM is a model-based OPE estimator, which estimates the policy value as:
\begin{align*}
    \hat{V}_{\mathrm{DM}}(\pi; \calD_0) := \mE_n \left[ \mE_{a \sim \pi(a|x_i)} \left[ \hat{q}(x_i, e_a) \right] \right].
\end{align*}
While DM is consistent when $\hat{q}$ is accurate, it is vulnerable to the extrapolation error. DM can be inaccurate especially for novel actions, as its value $\hat{q}$ entails huge uncertainty.

In contrast, IPS estimates the policy value using the importance sampling technique as follows.
\begin{align*}
    \hat{V}_{\mathrm{IPS}}(\pi; \calD_0) := \mE_n [ w(x_i, a_i) r_i ],
\end{align*}
where $w(x_i, a_i) = \pi(a_i | x_i) / \pi_0(a_i | x_i)$ is the importance weight. IPS enables unbiased estimation when $\pi_0$ is full support (i.e., $\forall (x, a) \in \calX \times \calA, \pi(x, a) > 0 \rightarrow \pi_0(x, a) > 0$). However, this support condition is not satisfied in the presence of novel actions, making IPS infeasible to evaluate the values of novel actions. Moreover, IPS is also known to suffer from high variance when $\pi$ deviate from $\pi_0$ greatly.

Finally, DR is a hybrid of DM and IPS, which estimates the policy value as follows.
\begin{align*}
    & \hat{V}_{\mathrm{DR}}(\pi; \calD_0) \\
    &:= \mE_n \left[ \mE_{a \sim \pi(a|x_i)} \left[ \hat{q}(x_i, e_a) \right] + w(x_i, a_i) (r_i - \hat{q}(x_i, e_{a_i})) \right].
\end{align*}
DR uses $\hat{q}$ as a baseline estimation as well as DM, but also corrects the estimation bias of $\hat{q}$ in a data-driven manner using importance sampling. 
Note that, due to the deficiency in $\calD_0$, DR relies solely on DM for the novel actions.

\begin{figure}[htb]
  \begin{minipage}[b]{0.49\linewidth}
    \centering
    \includegraphics[width=1.0\linewidth]{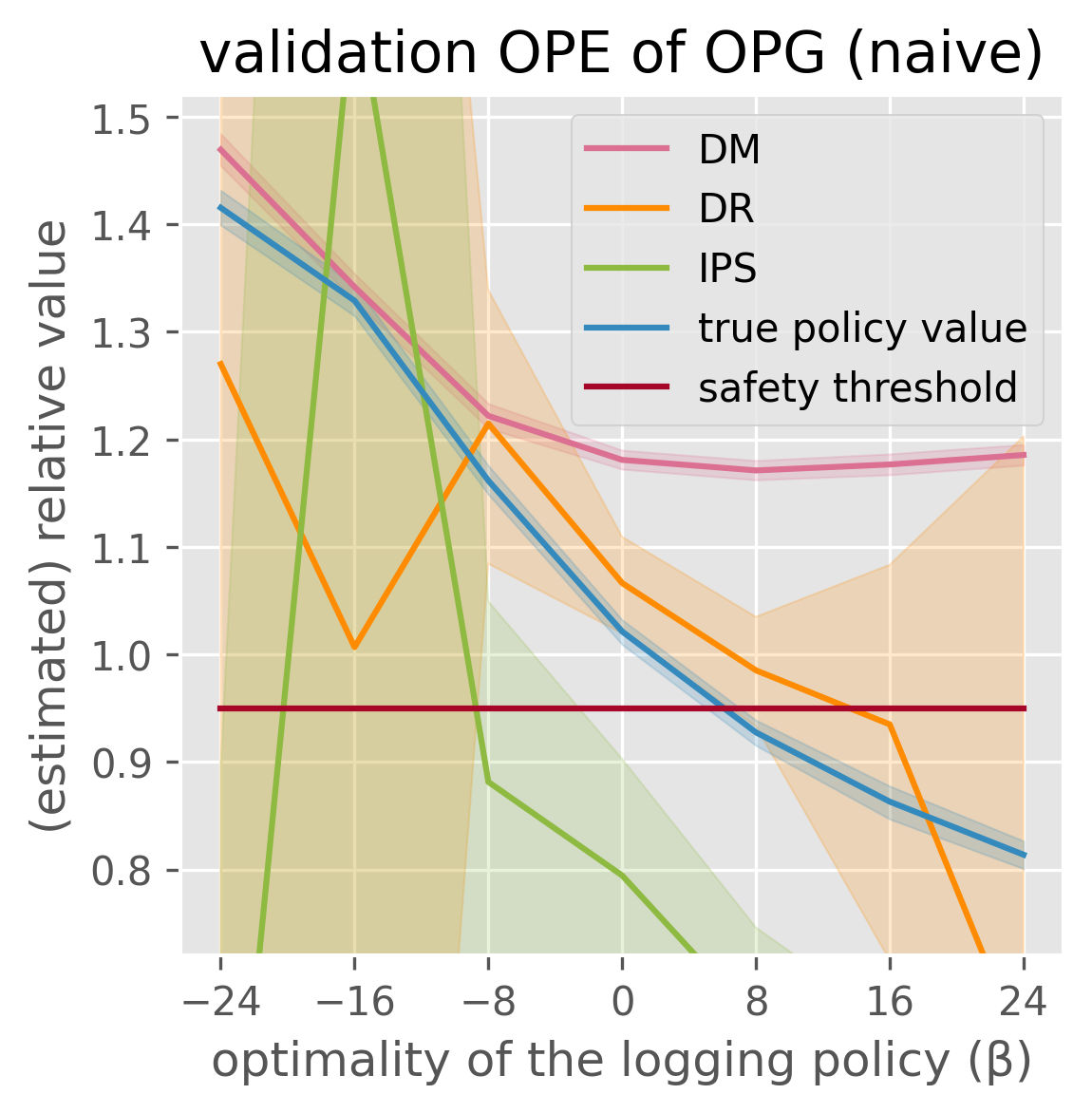}
  \end{minipage}
  \begin{minipage}[b]{0.49\linewidth}
    \centering
    \includegraphics[width=1.0\linewidth]{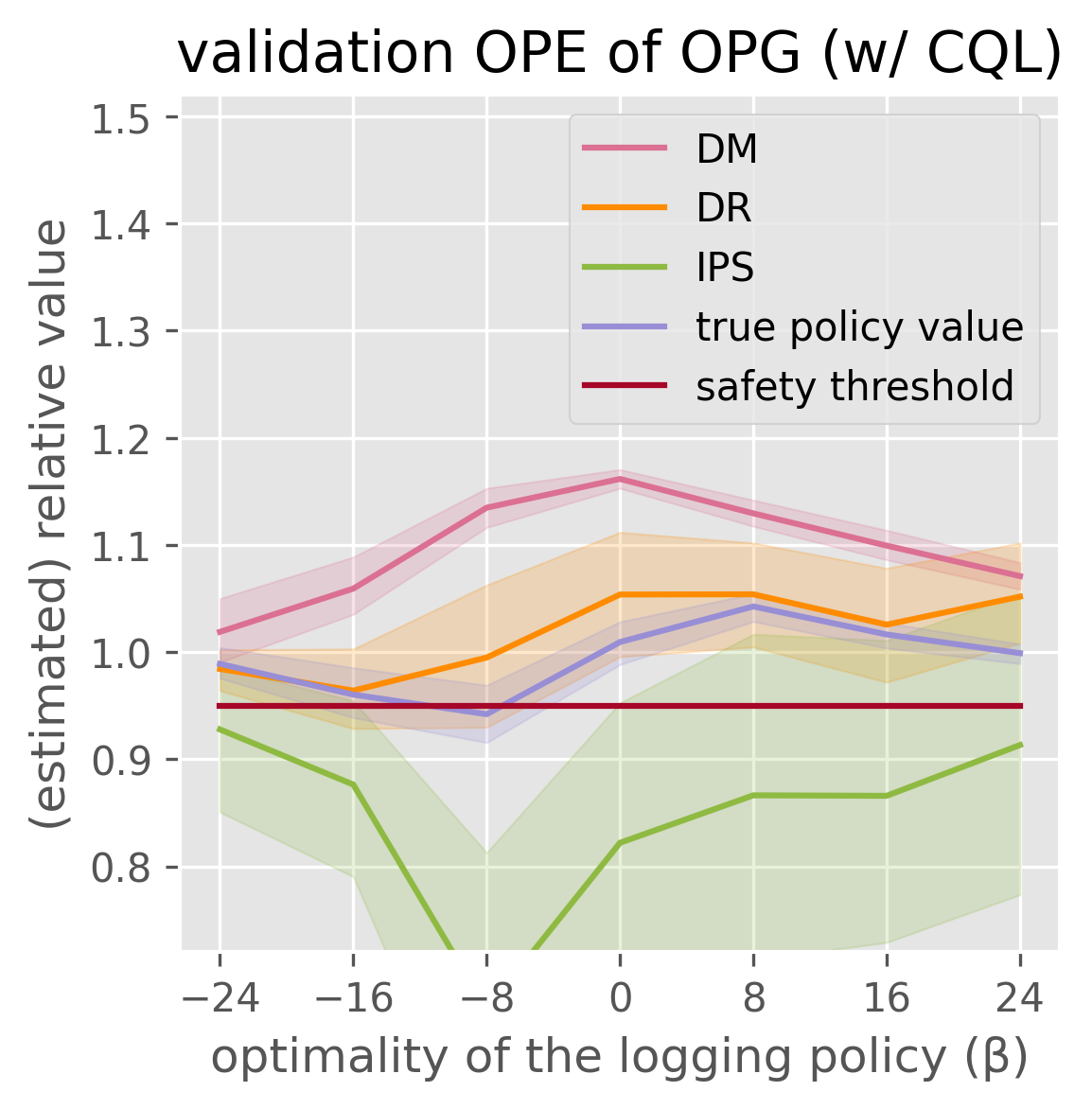}
  \end{minipage}
  \caption{Relative value of the policies of (Left) OPG (naive) and (Right) OPG (w/ CQL) estimated by various validation OPE estimators}
  \label{fig:validation}
\end{figure}

\begin{table*}[htb]
\large
\centering
\caption{Type I and type II error rates of the hypothesis test implemented with validation OPE} \label{tab:validation}
\scalebox{0.8}{
\begin{tabular}{ccccccc}
    \toprule
    OPL method && \multicolumn{2}{c}{OPG (naive)} && \multicolumn{2}{c}{OPG (w/ CQL)}
    \\ \cmidrule{1-1} \cmidrule{3-4} \cmidrule{6-7}
    OPE estimator &&
    Type I Error Rate & Type II Error Rate && Type I Error Rate & Type II Error Rate \\ \midrule \midrule
    IPS && 0.780 & 0.096 && 0.417 & 0.319 \\
    DM && 0.000 & 1.000 && 0.037 & 1.000 \\
    % DM (w/ pess) && 0.000 & 1.000 && 0.380 & 0.468 \\
    DR && 0.189 & 0.723 && 0.209 & 0.468 \\
    % DR (w/ pess) && 0.123 & 0.614 && 0.252 & 0.447 \\
    \bottomrule
\end{tabular}
}
\vskip 0.1in
\raggedright
\fontsize{8.0pt}{8.0pt}\selectfont \textit{Note}: A lower value is better for the both metrics. Our null and alternative hypothesis are $H_0: V(\pi) = V(\pi_0)$ and $H_1: V(\pi) > V(\pi_0)$, respectively. 
\end{table*}

\subsection{Results of Validation OPE} \label{app:validation_ope}
We conduct validation OPE on the policies learned by OPG (naive) and OPG (w/ CQL) and show the results in Figure~\ref{fig:validation}. We also report Type I and Type II error rates of the hypothesis test based on validation OPE in Table~\ref{tab:validation}. 
First, we observe that DM and DR tend to be too optimistic in deploying unsafe policies, which is unacceptable in practice. Specifically, DM records 1.000 in Type II error rate in both cases, indicating that DM cannot detect any unsafe policies due to overestimation of the policy value. DR slightly alleviate the overestimation, however, its Type I error rates are still high and almost half of the unsafe policies are mistakenly allowed for the deployment.

Whereas, we observe that IPS suffers from impractically variance, resulting in different OPE results between OPG (naive) and OPG (w/ CQL). 
In particular, we observe that IPS records high Type II error rate on OPG (w/ CQL), allowing more than 30\% of unsafe policies to be deployed.
In contrast, we observe that IPS produces high Type I error rate on OPG (naive), rejecting almost all policies regardless of the true safety (this observation is also attributed to the fact that IPS cannot evaluate novel actions due to deficient support~\citep{sachdeva2020off, london2020offline}). These results suggest that OPE can be unreliable when there exist novel actions or a large discrepancy between the logging and evaluation policies. Therefore, we need an OPL method that satisfies the safety constraint from the policy training phase.

\subsection{Additional Comparisons of OPL methods} \label{app:results}
We provide the comprehensive statistics of the policies learned by the compared OPL methods in Table~\ref{tab:violation}-\ref{tab:novelty}. 
First, Table~\ref{tab:violation} shows the proportion of the learned policies violating the safety constraint. We observe that OPG (naive) violate the constraint in most cases when $\pi_0$ performs well. In contrast, OPG (w/ CQL) violate the constraint when $\pi_0$ performs poorly. We attribute this observation to the quality of $\hat{q}$. To verify this, we compare the predicted reward of each $\hat{q}$ on both observed actions and novel actions in Figure~\ref{fig:true_reward}-\ref{fig:cql_reward}. 
We observe that the naive reward model estimates the values of novel actions as high as observed actions, whereas, in reality, novel actions are poor compared to the observed actions when $\beta=16$. 
This leads to the optimistic behaviors of OPG (naive) in terms of novel actions, which eventually deteriorate the value in near-optimal logging policy cases ($\beta > 0$).
On the other hand, CQL underestimates novel actions even when $\pi_0$ performs poorly ($\beta < 0$), which means that CQL is overconfident about observed actions. As a result, CQL in turn aggravates the performance of $\pi_0$ when $\pi_0$ is poor by relying too much on the (sub-optimal) observed actions. These results suggest the difficulty of dealing with safety issues in a model-based approach.

In contrast, Table~\ref{tab:violation} shows that our DEPSUE satisfies the safety constraint in almost every experimental conditions (i.e., a range of logging policies and $K$) except for a few cases of violation in $\beta=16$ for $K=5$. These safe behaviors of DEPSUE are due to our model-free approach in dealing with the uncertainty of novel actions. Moreover, even in a few cases of the constraint violation, DEPSUE ($K=5$) gains relatively large value in its worst case compared to the baselines. Specifically, we observe that the worst case relative value of DEPSUE ($K=5$) is 0.936, while those of OPG (naive) and OPG (w/ CQL) are 0.684 and 0.819, respectively. Thus, we argue that DEPSUE guarantees much safer behavior compared to the baseline OPG methods, while sucessfully exploring novel actions.

Next, we show the proportion of the learned policy acquiring novelty more than 0.1 and the mean and standard deviation of the novelty in Table~\ref{tab:novel_rate} and \ref{tab:novelty}, respectively. Comparing Table~\ref{tab:novelty}, we observe that the mean novelty of the baseline OPG methods is higher than that of DEPSUE. However, since the novelty of the baseline OPG heavily depends on the individual quality of $\hat{q}$, we observe that standard deviation of the novelty is also high for the baseline OPGs. Thus, the baseline methods only acquires novelty in limited cases, as shown in Table~\ref{tab:novel_rate}. On contrary to these observations, our DEPSUE gains smaller novelty on average as it becomes conservative about novel actions. However, DEPSUE gains adequate novelty ($\geq 0.1$) in more cases compared to the baseline method, enabling safely exploration of novel actions more constantly.

Finally, Figure~\ref{fig:5stage} shows how DEPSUE ($K=5$) explores novel actions stage by stage while ensuring safety. 
The result demonstrates that both relative value and novelty improve as the safety regularization is gradually relaxed. 
In particular, we observe that DEPSUE is relatively aggressive when the logging policies are poor and stochastic (i.e., $\beta=-16,-8,0$) This is because the HCOPE's lower bound more easily improves in the stochastic logging policy cases than others and the safety margin becomes larger in poor logging policy cases. On the other hand, we observe that DEPSUE is conservative when the logging policy is already near-optimal, as the safety margin is small. In this way, DEPSUE enables safe user exploration in a range of logging policies, even in the presence of novel actions.

\begin{table*}[h]
\begin{tabular}{cc}
\begin{minipage}{1.0\textwidth}
\large
\centering
\caption{Proportion of the learned policies violating the safety constraint in the semi-synthetic experiment}
\def\arraystretch{1.2}
\scalebox{0.80}{
\begin{tabular}{cccccccc}
\toprule
\textbf{OPL methods} & $\beta=-24$ & $\beta=-16$ & $\beta=-8$ & $\beta=0$ & $\beta=8$ & $\beta=16$ & $\beta=24$  \\\midrule \midrule
\textbf{OPG} (naive) 
& 0/30 & 0/30 & 0/30
& 2/30 & \textcolor{blue}{$\mathbf{21/30^{\dagger}}$} & \textcolor{blue}{$\mathbf{30/30^{\dagger}}$} 
& \textcolor{blue}{$\mathbf{30/30^{\dagger}}$} \\
\textbf{OPG} (w/ CQL) 
& 1/30 & \textcolor{blue}{$\mathbf{17/30^{\dagger}}$} & \textcolor{blue}{$\mathbf{19/30^{\dagger}}$}
& \textcolor{blue}{$\mathbf{6/30^{\dagger}}$} & 0/30 & 1/30 
& \textcolor{blue}{$\mathbf{3/30{\dagger}}$} \\ \midrule
\textbf{DEPSUE} ($K=1$) 
& 0/30 & 0/30 & 0/30
& 0/30 & 0/30 & 0/30 
& 0/30 \\
\textbf{DEPSUE} ($K=2$) 
& 0/30 & 0/30 & 0/30
& 0/30 & 0/30 & 0/30 
& 0/30 \\ 
\textbf{DEPSUE} ($K=5$) 
& 0/30 & 0/30 & 0/30
& 0/30 & 1/30 & \textcolor{blue}{$\mathbf{3/30^{\dagger}}$}
& 1/30 \\
\bottomrule
\end{tabular}
}
\vskip 0.1in
\raggedright
\fontsize{8.5pt}{8.5pt}\selectfont \textit{Note}:
A lower value is better. The \textcolor{blue}{$\mathbf{blue^{\dagger}}$} fonts represent the OPL methods that violate the safety constraints in more than two cases.
\label{tab:violation}
% \end{table*}
\end{minipage}
\end{tabular}
\end{table*}

\begin{table*}[h]
\begin{tabular}{cc}
\begin{minipage}{1.0\textwidth}
% \begin{table*}[t]
\large
\centering
\caption{Mean and worst case relative policy value in the semi-synthetic experiment}
\def\arraystretch{1.2}
\scalebox{0.65}{
\begin{tabular}{cccccccc}
\toprule
\textbf{OPL methods} & $\beta=-24$ & $\beta=-16$ & $\beta=-8$ & $\beta=0$ & $\beta=8$ & $\beta=16$ & $\beta=24$  \\\midrule \midrule
\textbf{OPG} (naive) 
& 1.415 ($\geq$\textcolor{dkred}{$\mathbf{1.320^{\ast}}$}) & \textcolor{dkred}{$\mathbf{1.329^{\ast}}$} ($\geq$\textcolor{dkred}{$\mathbf{1.226^{\ast}}$}) & 1.162 ($\geq$\textcolor{dkred}{$\mathbf{1.108^{\ast}}$})
& \textcolor{dkred}{$\mathbf{1.021^{\ast}}$} ($\geq$\textcolor{blue}{$\mathbf{0.924^{\dagger}}$}) & \textcolor{blue}{$\mathbf{0.927^{\dagger}}$} ($\geq$\textcolor{blue}{$\mathbf{0.860^{\dagger}}$}) & \textcolor{blue}{$\mathbf{0.863^{\dagger}}$} ($\geq$\textcolor{blue}{$\mathbf{0.708^{\dagger}}$}) 
& \textcolor{blue}{$\mathbf{0.813^{\dagger}}$} ($\geq$\textcolor{blue}{$\mathbf{0.684^{\dagger}}$}) \\
\textbf{OPG} (w/ CQL) 
& 0.989 ($\geq$\textcolor{blue}{$\mathbf{0.937^{\dagger}}$}) & 0.960 ($\geq$\textcolor{blue}{$\mathbf{0.889^{\dagger}}$}) & \textcolor{blue}{$\mathbf{0.942^{\dagger}}$} ($\geq$\textcolor{blue}{$\mathbf{0.819^{\dagger}}$})
& 1.009 ($\geq$\textcolor{blue}{$\mathbf{0.901^{\dagger}}$}) & \textcolor{dkred}{$\mathbf{1.043^{\ast}}$} ($\geq$0.972) & \textcolor{dkred}{$\mathbf{1.016^{\ast}}$} ($\geq$\textcolor{blue}{$\mathbf{0.931^{\dagger}}$})
& \textcolor{dkred}{$\mathbf{0.999^{\ast}}$} ($\geq$\textcolor{blue}{$\mathbf{0.935^{\dagger}}$}) \\ \midrule
\textbf{DEPSUE} ($K=1$) 
& 1.037 ($\geq$1.026) & 1.063 ($\geq$1.029) & 1.074 ($\geq$1.060)
& 1.011 ($\geq$\textcolor{dkred}{$\mathbf{1.002^{\ast}}$}) & 0.982 ($\geq$\textcolor{dkred}{$\mathbf{0.974^{\ast}}$}) & 0.981 ($\geq$\textcolor{dkred}{$\mathbf{0.979^{\ast}}$})
& 0.982 ($\geq$\textcolor{dkred}{$\mathbf{0.980^{\ast}}$}) \\
\textbf{DEPSUE} ($K=2$) 
& 1.141 ($\geq$1.022) & 1.145 ($\geq$1.127) & 1.102  ($\geq$1.074)
& 1.011 ($\geq$0.998) & 0.969 ($\geq$0.959) & 0.972 ($\geq$0.954)
& 0.982 ($\geq$0.960) \\ 
\textbf{DEPSUE} ($K=5$) 
& 1.254 ($\geq$1.174) & 1.201 ($\geq$1.089) & 1.104 ($\geq$1.079)
& 1.015 ($\geq$0.991) & 0.972 ($\geq$\textcolor{blue}{$\mathbf{0.944^{\dagger}}$}) & 0.978 ($\geq$\textcolor{blue}{$\mathbf{0.936^{\dagger}}$})
& 0.982 ($\geq$\textcolor{blue}{$\mathbf{0.950^{\dagger}}$}) \\
\bottomrule
\end{tabular}
}
\vskip 0.1in
\raggedright
\fontsize{8.5pt}{8.5pt}\selectfont \textit{Note}:
A higher value is better. 
The \textcolor{dkred}{$\mathbf{red^{\ast}}$} fonts represent the best OPL methods. The \textcolor{blue}{$\mathbf{blue^{\dagger}}$} fonts represent the OPL methods that violate the safety constraint (i.e., 0.950).
\label{tab:value}
% \end{table*}
\end{minipage}
\end{tabular}
\end{table*}

\begin{table*}[h]
\begin{tabular}{cc}
\begin{minipage}{1.0\textwidth}
% \begin{table*}[t]
\large
\centering
\caption{Proportion of the learned policies acquiring novelty more than 0.1 in the semi-synthetic experiment}
\def\arraystretch{1.2}
\scalebox{0.80}{
\begin{tabular}{cccccccc}
\toprule
\textbf{OPL methods} & $\beta=-24$ & $\beta=-16$ & $\beta=-8$ & $\beta=0$ & $\beta=8$ & $\beta=16$ & $\beta=24$  \\\midrule \midrule
\textbf{OPG} (naive) 
& 4/30 & 6/30 & 14/30
& 18/30 & \textcolor{blue}{$\mathbf{21/30^{\dagger}}$} & \textcolor{blue}{$\mathbf{17/30^{\dagger}}$} 
& \textcolor{blue}{$\mathbf{11/30^{\dagger}}$} \\
\textbf{OPG} (w/ CQL) 
& 1/30 & \textcolor{blue}{$\mathbf{2/30^{\dagger}}$} & \textcolor{blue}{$\mathbf{9/30^{\dagger}}$}
& \textcolor{blue}{$\mathbf{5/30^{\dagger}}$} & 5/30 & \textcolor{dkred}{$\mathbf{5/30^{\ast}}$} 
& \textcolor{blue}{$\mathbf{4/30^{\dagger}}$} \\ \midrule
\textbf{DEPSUE} ($K=1$) 
& 0/30 & 1/30 & \textcolor{dkred}{$\mathbf{28/30^{\ast}}$}
& 1/30 & 0/30 & 0/30 
& 0/30 \\
\textbf{DEPSUE} ($K=2$) 
& 4/30 & 8/30 & 21/30
& \textcolor{dkred}{$\mathbf{30/30^{\ast}}$} & 13/30 & 0/30 
& 0/30 \\ 
\textbf{DEPSUE} ($K=5$) 
& \textcolor{dkred}{$\mathbf{15/30^{\ast}}$} & \textcolor{dkred}{$\mathbf{15/30^{\ast}}$} & 27/30
& 29/30 & \textcolor{dkred}{$\mathbf{16/30^{\ast}}$} & \textcolor{blue}{$\mathbf{1/30^{\dagger}}$} 
& 0/30 \\
\bottomrule
\end{tabular}
}
\vskip 0.1in
\raggedright
\fontsize{8.5pt}{8.5pt}\selectfont \textit{Note}:
A higher value is better. 
The \textcolor{dkred}{$\mathbf{red^{\ast}}$} fonts represent the OPL methods that acquire adequate novelty ($N(\pi) \geq 0.1$) in most cases among those satisfying the safety constraint. The \textcolor{blue}{$\mathbf{blue^{\dagger}}$} fonts represent the OPL methods that violate the safety constraint in more than two cases.
\label{tab:novel_rate}
% \end{table*}
\end{minipage}
\end{tabular}
\end{table*}

\begin{table*}[h]
\begin{tabular}{cc}
\begin{minipage}{1.0\textwidth}
% \begin{table*}[t]
\large
\centering
\caption{Mean and standard deviation of the policy novelty in the semi-synthetic experiment}
\def\arraystretch{1.2}
\scalebox{0.65}{
\begin{tabular}{cccccccc}
\toprule
\textbf{OPL methods} & $\beta=-24$ & $\beta=-16$ & $\beta=-8$ & $\beta=0$ & $\beta=8$ & $\beta=16$ & $\beta=24$  \\\midrule \midrule
\textbf{OPG} (naive) 
& 0.133 ($\pm$0.313) & 0.196 ($\pm$0.355) & 0.212 ($\pm$0.282)
& 0.237 ($\pm$0.263) & 0.265 ($\pm$0.268) & 0.202 ($\pm$0.233) 
& 0.146 ($\pm$0.209) \\
\textbf{OPG} (w/ CQL) 
& 0.034 ($\pm$0.179) & 0.069 ($\pm$0.247) & 0.299 ($\pm$0.443)
& 0.168 ($\pm$0.336) & 0.169 ($\pm$0.359) & 0.164 ($\pm$0.361) 
& 0.132 ($\pm$0.335) \\ \midrule
\textbf{DEPSUE} ($K=1$) 
& 0.018 ($\pm$0.003) & 0.052 ($\pm$0.023) & 0.120 ($\pm$0.013)
& 0.086 ($\pm$0.006) & 0.045 ($\pm$0.004) & 0.019 ($\pm$0.002) 
& 0.010 ($\pm$0.001) \\
\textbf{DEPSUE} ($K=2$) 
& 0.075 ($\pm$0.027) & 0.112 ($\pm$0.048) & 0.124 ($\pm$0.048)
& 0.134 ($\pm$0.021) & 0.098 ($\pm$0.016) & 0.037 ($\pm$0.015) 
& 0.021 ($\pm$0.008) \\
\textbf{DEPSUE} ($K=5$) 
& 0.139 ($\pm$0.102) & 0.137 ($\pm$0.080) & 0.169 ($\pm$0.058)
& 0.157 ($\pm$0.047) & 0.097 ($\pm$0.049) & 0.032 ($\pm$0.035) 
& 0.012 ($\pm$0.017) \\
\bottomrule
\end{tabular}
}
\vskip 0.1in
\raggedright
\fontsize{8.5pt}{8.5pt}\selectfont \textit{Note}:
A higher value is better.
\label{tab:novelty}
\end{minipage}
\end{tabular}
\end{table*}

\begin{figure*}[htbp]
  \begin{minipage}[b]{0.98\linewidth}
  \centering
      \begin{minipage}[b]{0.33\linewidth}
        \centering
        \includegraphics[width=1.0\linewidth]{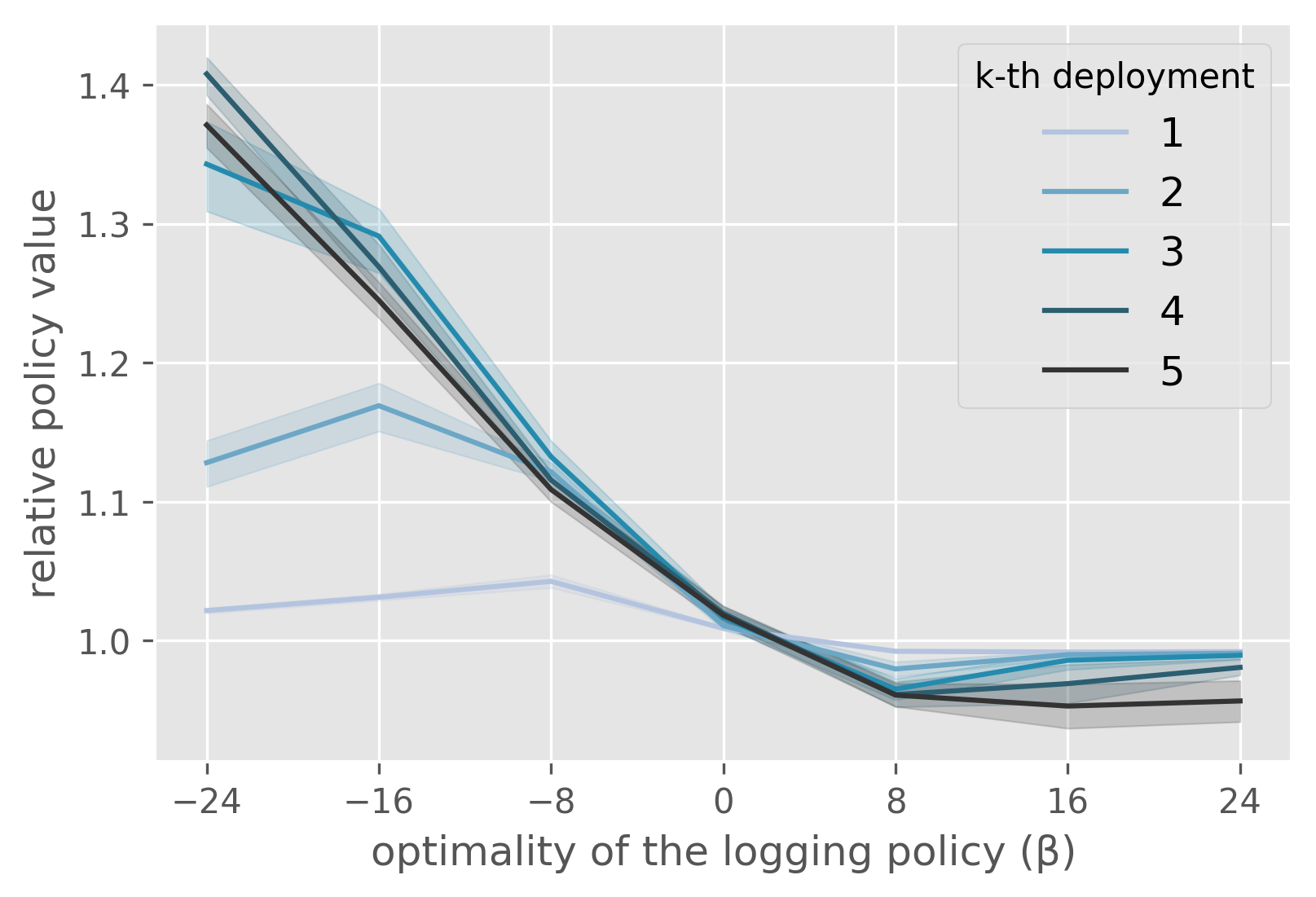}
        \textcolor{white}{xx} Relative Value
      \end{minipage}
      \begin{minipage}[b]{0.33\linewidth}
        \centering
        \includegraphics[width=1.0\linewidth]{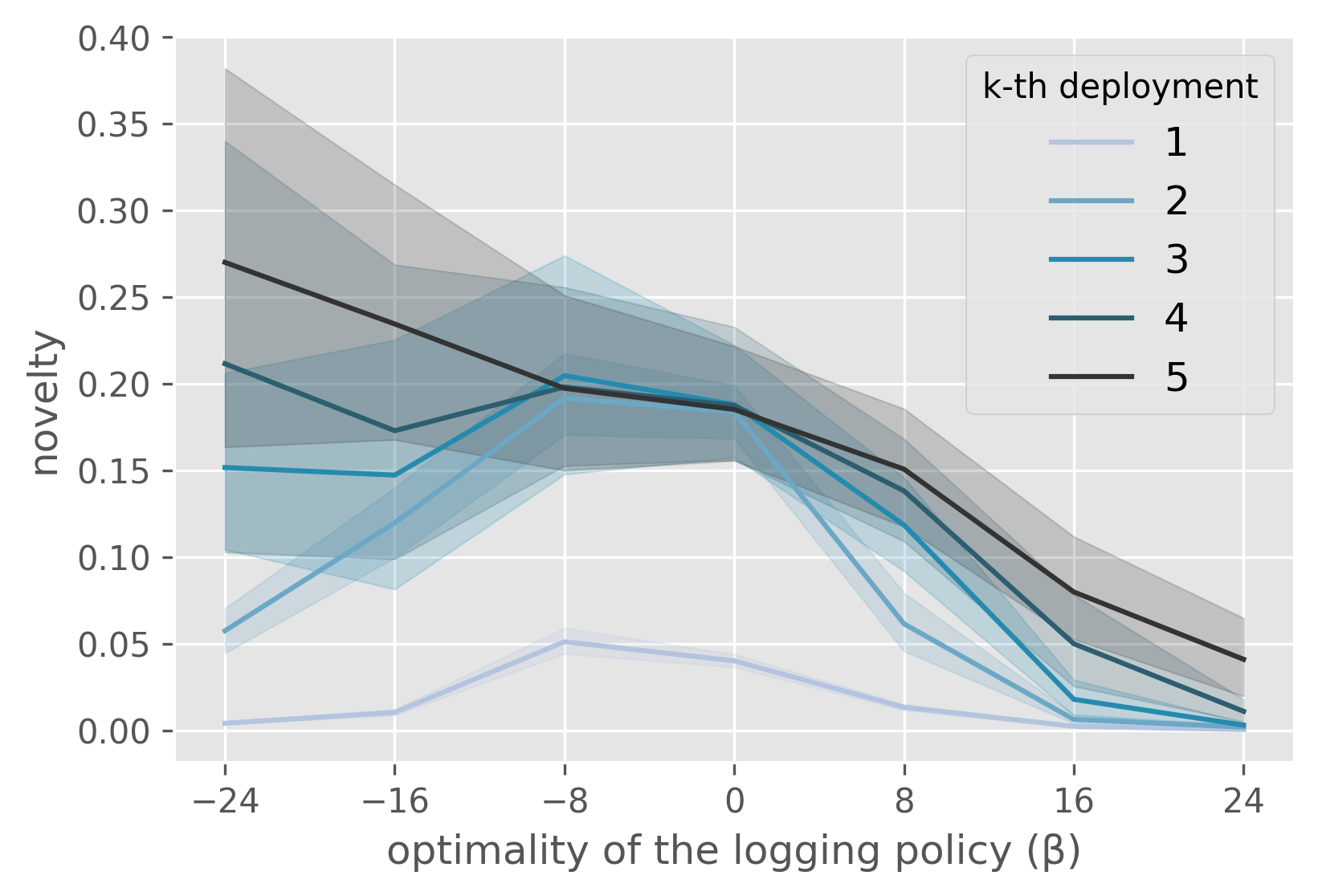}
        \textcolor{white}{xxx} Novelty
        \label{fig:one_stage_app}
      \end{minipage}
      \vspace{-2mm}
      \caption{Comparing relative value and novelty of the $k$-th deployment policies of DEPSUE ($K=5$).}
      \label{fig:5stage}
  \end{minipage}
\end{figure*}

\begin{figure*}[!htb]
\scalebox{0.95}{
\begin{tabular}{cccc}
\toprule
\textbf{logging policies} & \textbf{$\beta=-16$} & \textbf{$\beta=0$} & \textbf{$\beta=16$} \\ \midrule
\textbf{type of action} 
&
\multicolumn{3}{c}{
\begin{minipage}{0.75\hsize}
    \begin{center}
        \includegraphics[width=0.42\linewidth]{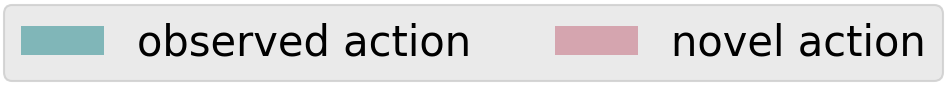}
    \end{center}
\end{minipage}
}
\\ \midrule \midrule
\textbf{ground-truth ($q$)}
&
\begin{minipage}{0.25\hsize}
    \begin{center}
        \includegraphics[clip, width=4.8cm]{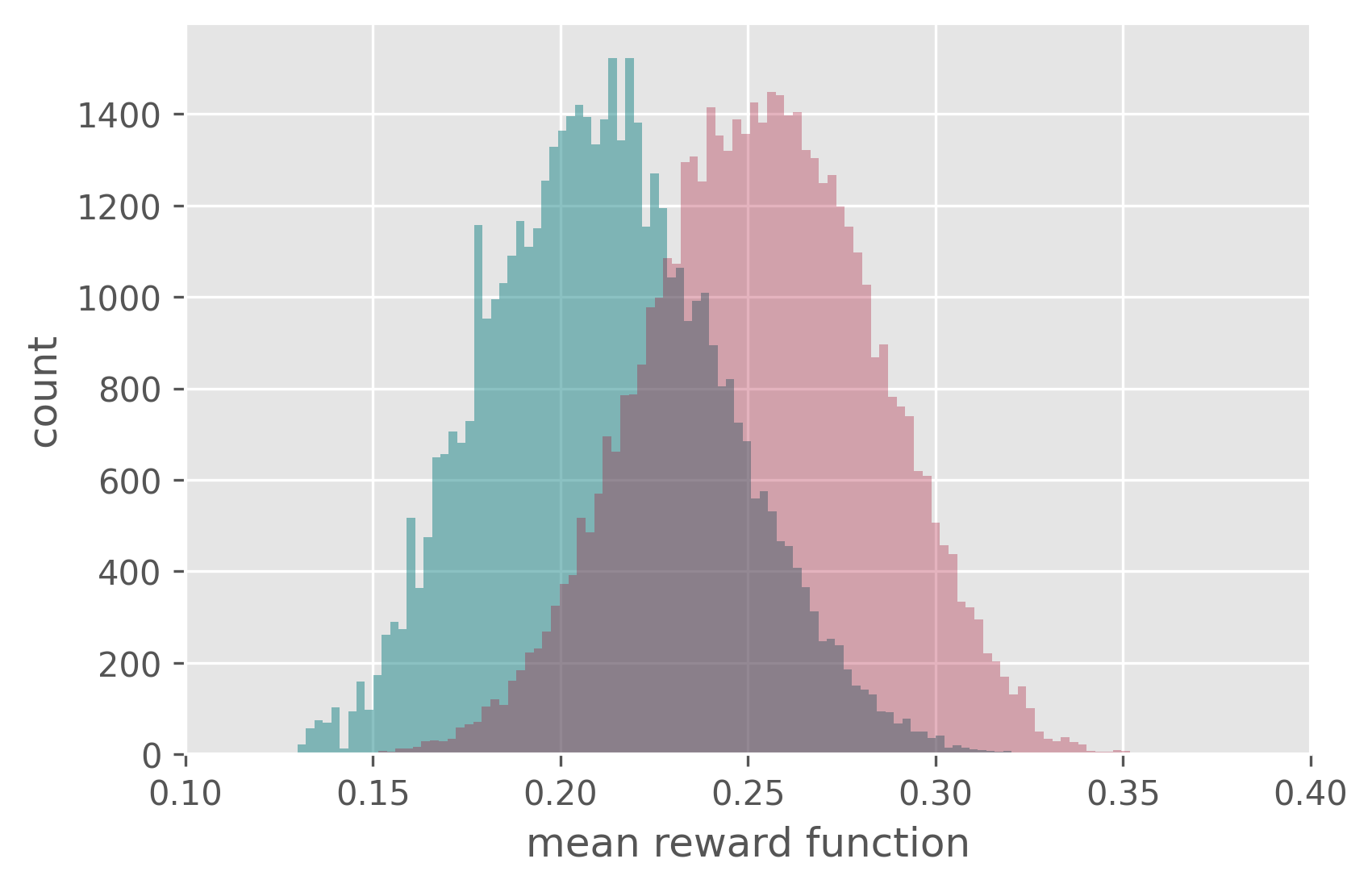}
    \end{center}
\end{minipage}
&
\begin{minipage}{0.25\hsize}
    \begin{center}
        \includegraphics[clip, width=4.8cm]{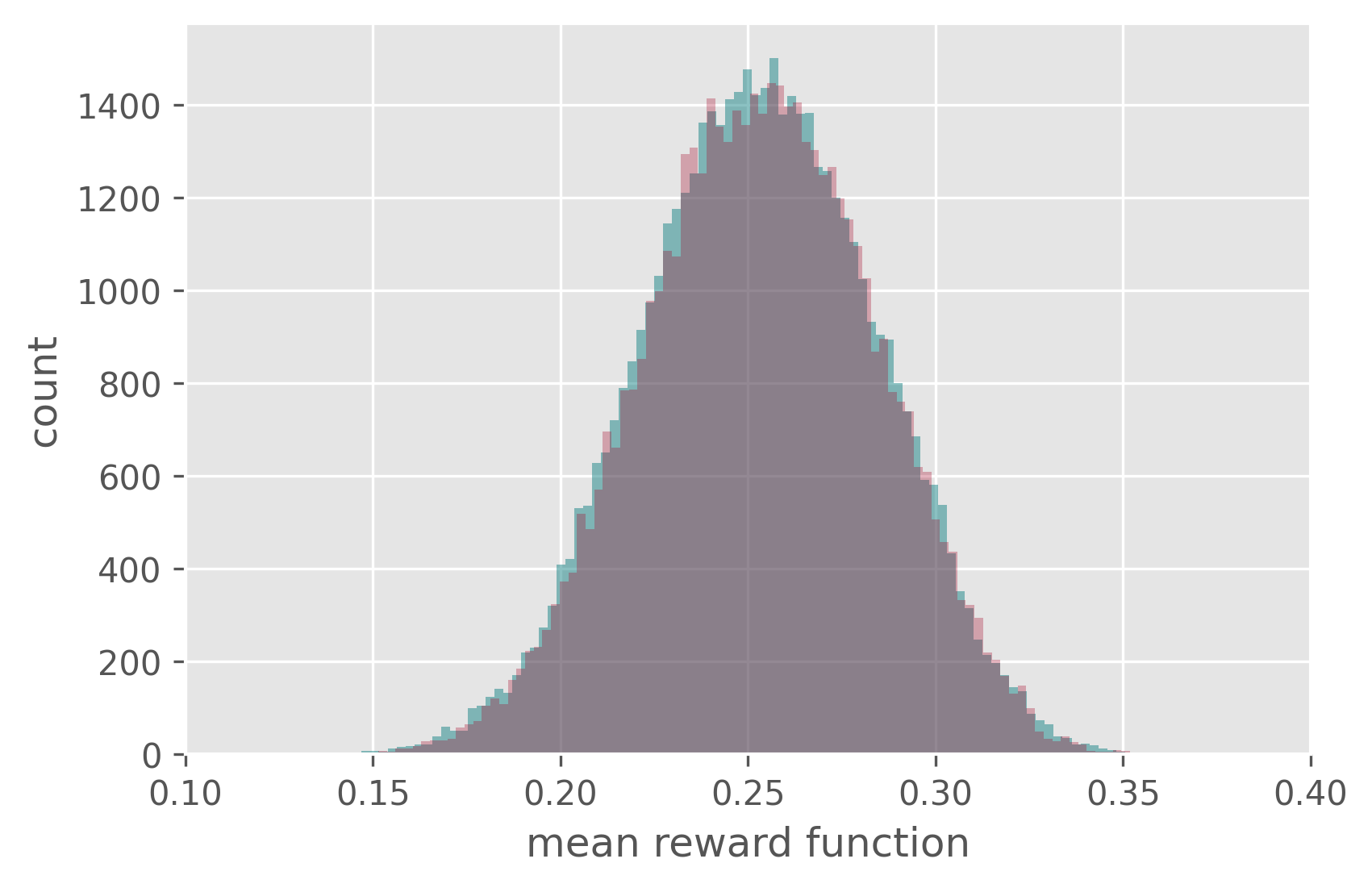}
    \end{center}
\end{minipage}
&
\begin{minipage}{0.25\hsize}
    \begin{center}
        \includegraphics[clip, width=4.8cm]{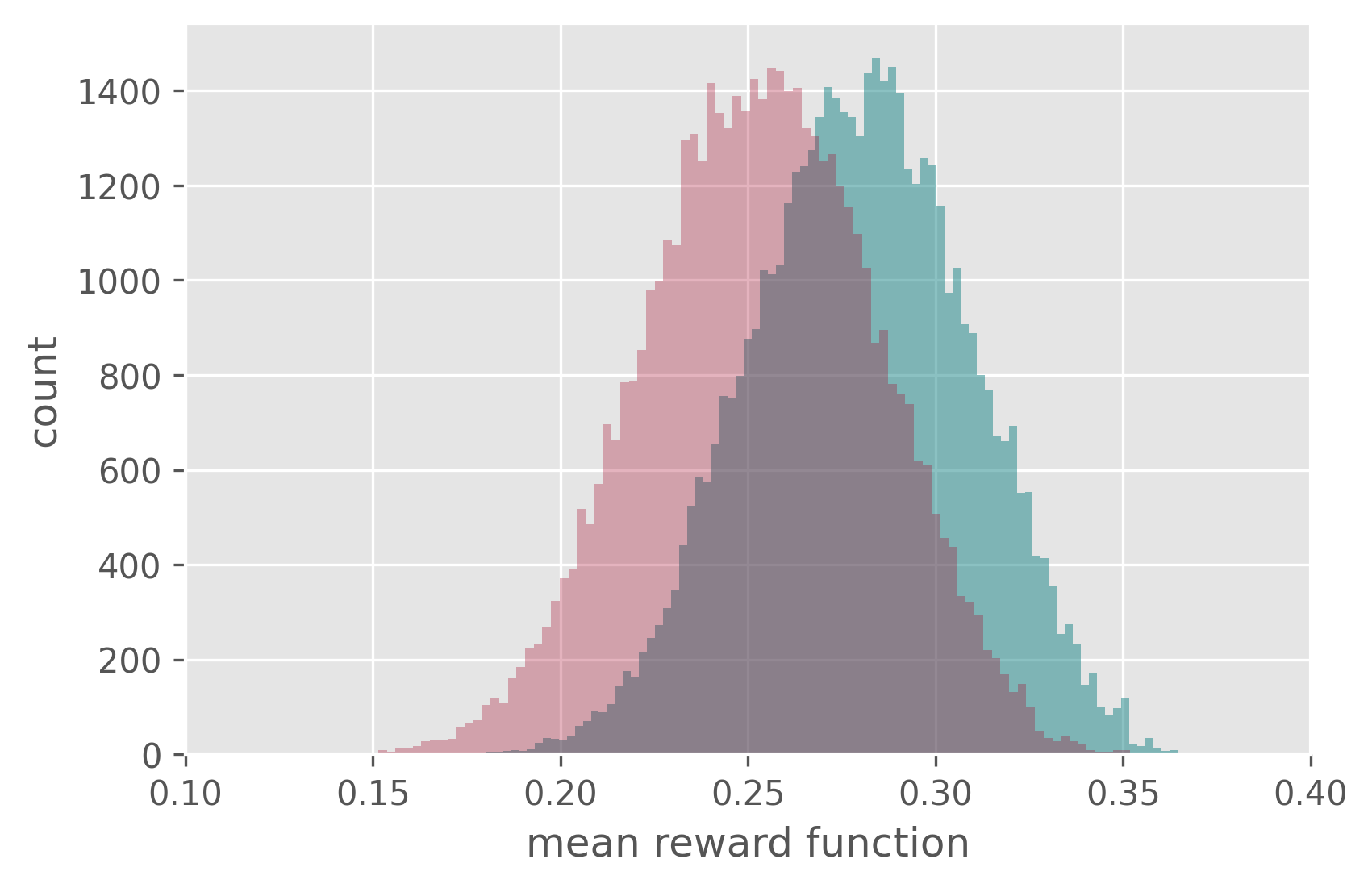}
    \end{center}
\end{minipage}
\\
& 
\multicolumn{3}{c}{
\begin{minipage}{0.75\hsize}
\begin{center}
\caption{Distribution of the ground-truth mean reward function ($q$)}
\label{fig:true_reward}
\end{center}
\end{minipage}
}
\\
\textbf{estimated w/ naive ($\hat{q}$)}
&
\begin{minipage}{0.25\hsize}
    \begin{center}
        \includegraphics[clip, width=4.8cm]{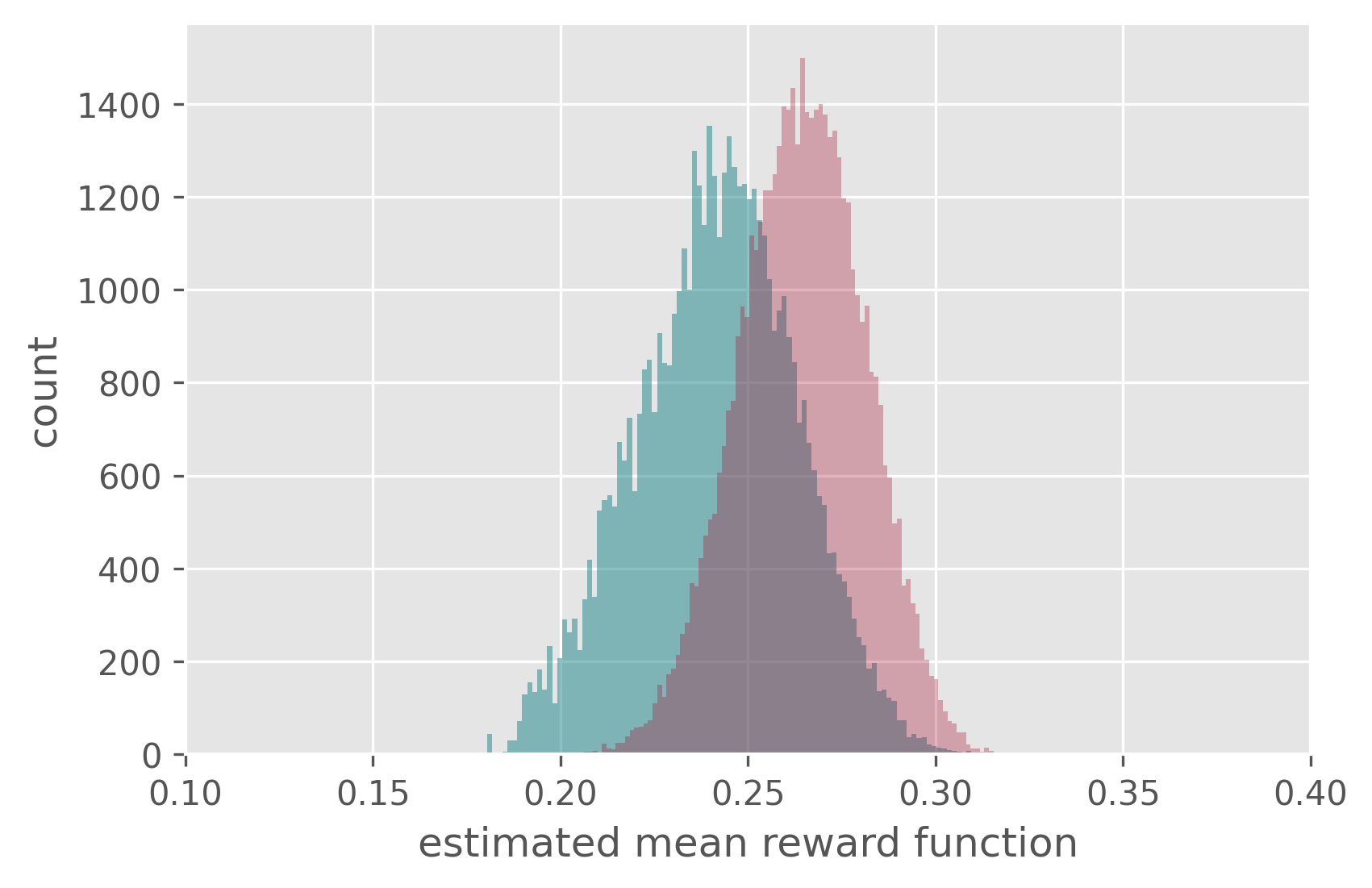}
    \end{center}
\end{minipage}
&
\begin{minipage}{0.25\hsize}
    \begin{center}
        \includegraphics[clip, width=4.8cm]{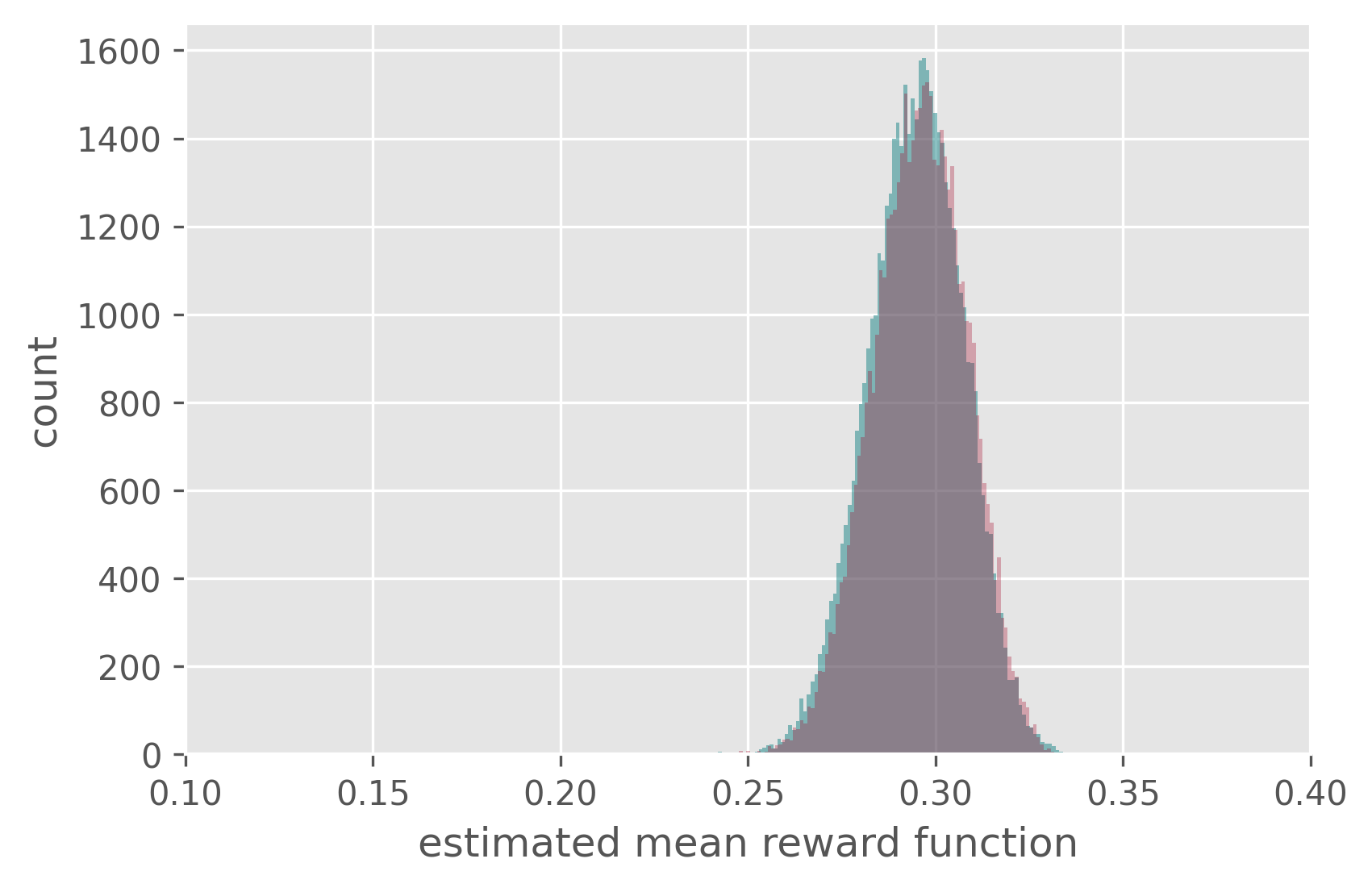}
    \end{center}
\end{minipage}
&
\begin{minipage}{0.25\hsize}
    \begin{center}
        \includegraphics[clip, width=4.8cm]{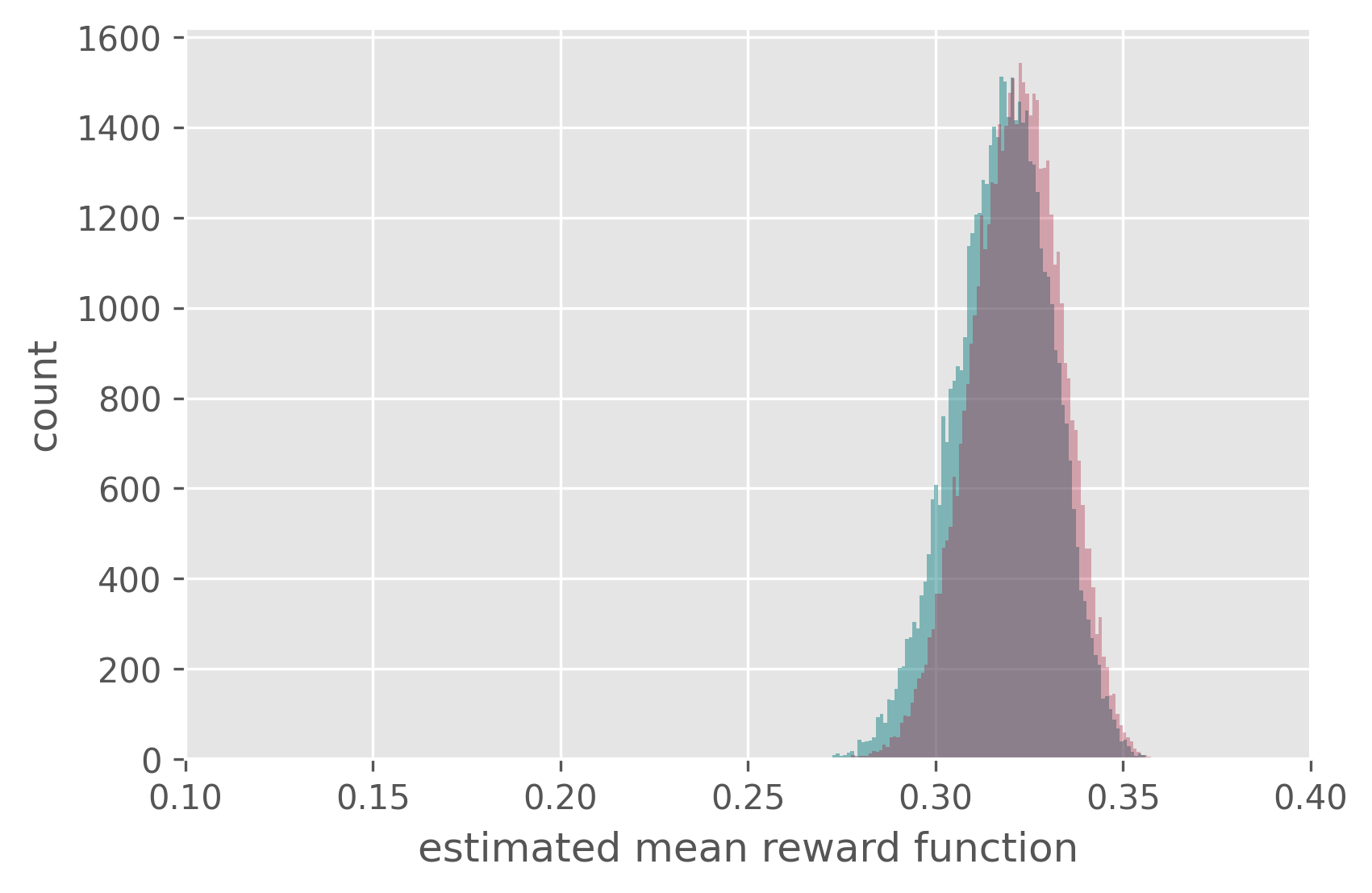}
    \end{center}
\end{minipage}
\\
&
\multicolumn{3}{c}{
\begin{minipage}{0.75\hsize}
\begin{center}
\caption{Distribution of the mean reward function ($\hat{q}$) estimated by the mean reward model}
\label{fig:mean_reward}
\end{center}
\end{minipage}
}
\\
\textbf{estimated w/ CQL ($\hat{q}$)}
&
\begin{minipage}{0.25\hsize}
    \begin{center}
        \includegraphics[clip, width=4.8cm]{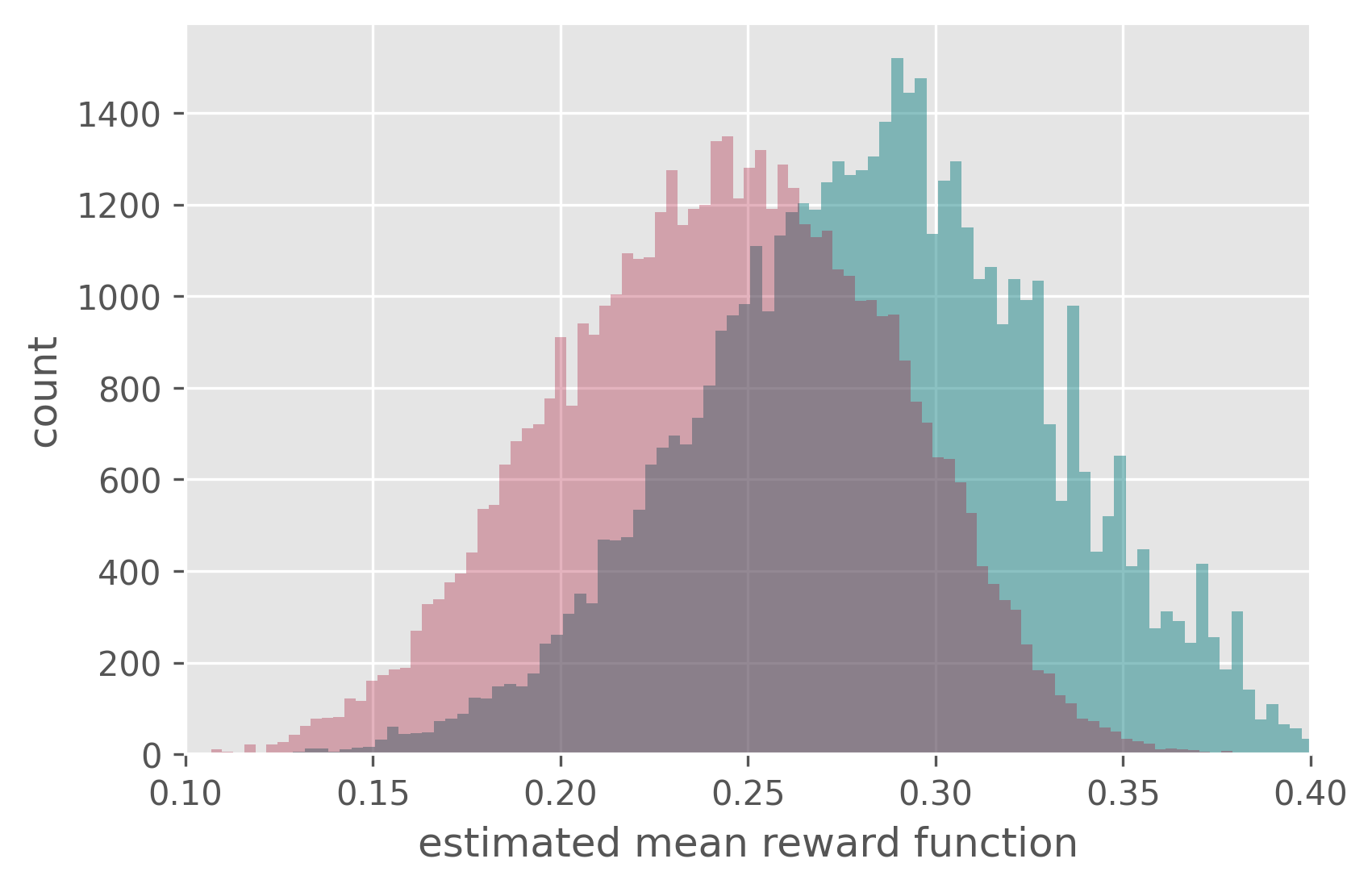}
    \end{center}
\end{minipage}
&
\begin{minipage}{0.25\hsize}
    \begin{center}
        \includegraphics[clip, width=4.8cm]{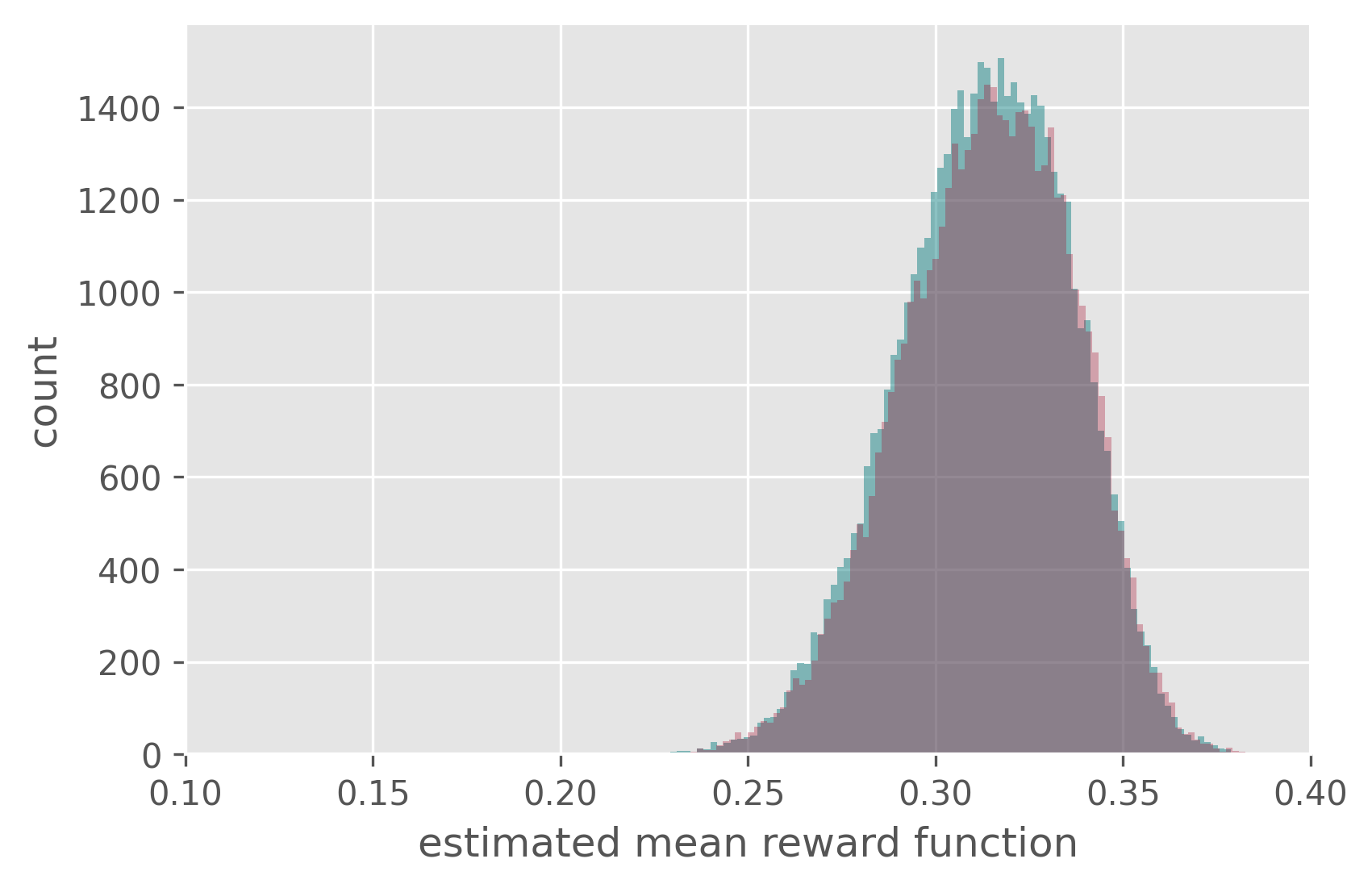}
    \end{center}
\end{minipage}
&
\begin{minipage}{0.25\hsize}
    \begin{center}
        \includegraphics[clip, width=4.8cm]{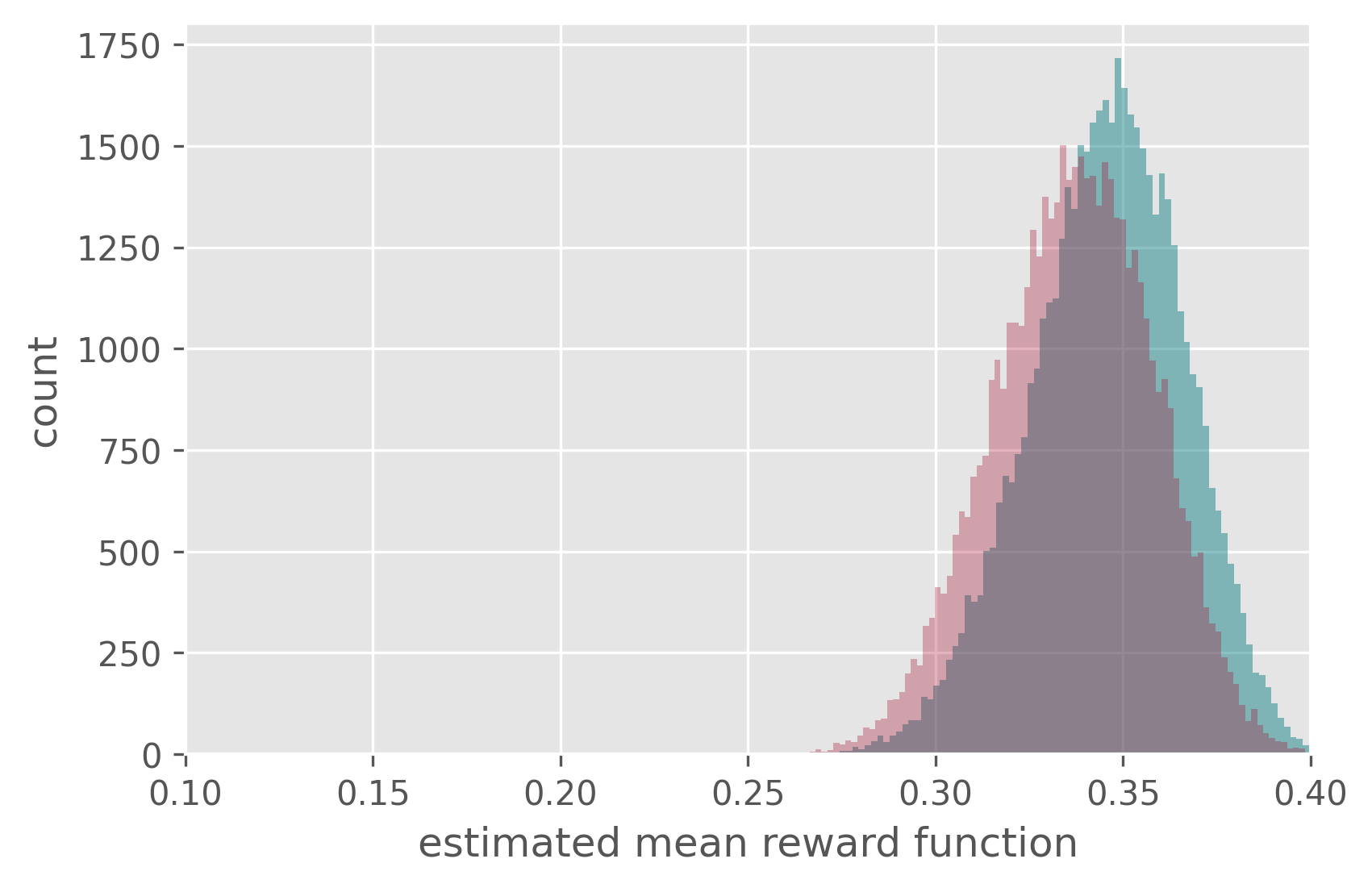}
    \end{center}
\end{minipage}
\\
& 
\multicolumn{3}{c}{
\begin{minipage}{0.75\hsize}
\begin{center}
\caption{Distribution of the mean reward function ($\hat{q}$) estimated by CQL}
\label{fig:cql_reward}
\end{center}
\end{minipage}
}
\\
\bottomrule
\end{tabular}
}
\end{figure*}

% \bibliographystyle{ACM-Reference-Format}
% \bibliography{ref.bib}

\end{document}